\documentclass[twocolumn, 10pt]{article} 

\usepackage{arxiv}
\usepackage{amsmath}
\usepackage{amssymb}
\usepackage{mathtools}
\usepackage{amsthm}
\usepackage{wrapfig}
\usepackage{graphicx}
\usepackage{subcaption}
\usepackage{placeins}
\usepackage{siunitx}
\usepackage{times}
\usepackage[numbers]{natbib}
\usepackage{algorithm}
\usepackage{algorithmic}
\usepackage{booktabs}

\newtheorem{theorem}{Theorem}

\newcommand{\eat}[1]{}

\title{Learning to Guide Local Search for MPE Inference \\ in Probabilistic Graphical Models}

\author{
Brij Malhotra\\
Department of Computer Science \\
The University of Texas at Dallas\\
\texttt{brijgulsharan.malhotra@utdallas.edu}
\and
Shivvrat Arya\\
Department of Computer Science \\
New Jersey Institute of Technology\\
\texttt{shivvrat.arya@njit.edu}
\and
\hspace{2em}Tahrima Rahman\\
\hspace{2em}Department of Computer Science \\
\hspace{2em}The University of Texas at Dallas\\
\hspace{2em}\texttt{tahrima.rahman@utdallas.edu}
\and
\hspace{2em}Vibhav Gogate\\
\hspace{2em}Department of Computer Science \\
\hspace{2em}The University of Texas at Dallas\\
\hspace{2em}\texttt{vibhav.gogate@utdallas.edu}
}
\date{}

\begin{document}

\twocolumn[
  \begin{@twocolumnfalse}
    \maketitle
    \begin{abstract}
      \centering
      \begin{minipage}{0.85\textwidth} 
Most Probable Explanation (MPE) inference in Probabilistic Graphical Models (PGMs) is a fundamental yet computationally challenging problem arising in domains such as diagnosis, planning, and structured prediction. In many practical settings, the graphical model remains fixed while inference must be performed repeatedly for varying evidence patterns. Stochastic Local Search (SLS) algorithms scale to large models but rely on myopic \textit{best-improvement rule} that prioritizes immediate likelihood gains and often stagnate in poor local optima. Heuristics such as Guided Local Search (GLS+) partially alleviate this limitation by modifying the search landscape, but their guidance cannot be reused effectively across multiple inference queries on the same model. We propose a neural amortization framework for improving local search in this repeated-query regime. Exploiting the fixed graph structure, we train an attention-based network to score local moves by predicting their ability to reduce Hamming distance to a near-optimal solution. Our approach integrates seamlessly with existing local search procedures, using this signal to balance short-term likelihood gains with long-term promise during neighbor selection. We provide theoretical intuition linking distance-reducing move selection to improved convergence behavior, and empirically demonstrate consistent improvements over SLS and GLS+ on challenging high-treewidth benchmarks in the amortized inference setting.
        \vspace{1cm} 
      \end{minipage}
    \end{abstract}
  \end{@twocolumnfalse}
]

\section{Introduction}\label{sec:Intro}

Probabilistic Graphical Models (PGMs) \citep{koller-pgm} efficiently encode joint probability distributions over a large set of random variables, enabling structured reasoning under uncertainty. A fundamental inference task in these models is the Most Probable Explanation (MPE) query, where the goal is to find the most likely assignment of values to unobserved variables given observed evidence. Despite its importance in domains such as diagnosis, vision, and planning, MPE inference is NP-hard and becomes increasingly intractable as models grow in size, density, and treewidth. This scalability barrier continues to limit the applicability of PGMs in large-scale and real-world settings.

A broad range of algorithmic paradigms have been developed to address MPE inference. Exact methods, including variable elimination~\citep{dechter99} and AND/OR Branch-and-Bound (AOBB)~\citep{AOBB}, exploit problem structure to guarantee optimality. However, their time and memory requirements scale exponentially with the model's induced width, rendering them impractical for large or densely connected graphs. Although AOBB operates in an anytime fashion, its performance deteriorates as bounding quality and caching effectiveness degrade with increasing complexity. As a result, practitioners often turn to approximate inference techniques when exact solvers fail to return solutions within feasible resource budgets.

Stochastic local search (SLS) methods offer a compelling alternative for large-scale MPE inference. These methods require minimal memory, improve solutions incrementally, and parallelize naturally, making them well suited for larger graphical models. More broadly, local search has achieved strong empirical performance across large combinatorial optimization problems, including the Traveling Salesman Problem, Weighted MaxSAT, and Vehicle Routing~\citep{hoos-sls}. A typical local search algorithm alternates between \emph{intensification}, which exploits the local neighborhood of the current solution, and \emph{diversification}, which promotes exploration to escape local optima. Intensification strategies include best-improvement, first-improvement, and steepest descent~\citep{Aloise2025}, while diversification is implemented through mechanisms such as controlled randomness, adaptive memory, or neighborhood perturbations schemes based on metaheuristics including Simulated Annealing~\citep{Russell-AIMA}, Tabu Search~\citep{Glover1993}, Guided Local Search (GLS)~\citep{Tsang1997}, Variable Neighborhood Search~\citep{Brimberg2023}, and Large Neighborhood Search~\citep{Shaw98}. Achieving an effective balance between the two operations is essential for robust performance.

In the context of MPE inference, the \emph{best-improvement} heuristic is a standard intensification strategy for SLS. At each step, it selects the neighboring assignment that maximally increases log-likelihood. While effective in the immediate neighborhood, this heuristic is inherently myopic and frequently becomes trapped in poor local optima, especially in high-dimensional models with high treewidth~\citep{pmlr-vR2-kask99a}. Penalty-based extensions such as GLS~\citep{Park-gls, gls+} attempt to mitigate this issue by discouraging revisits to previously explored states. However, these penalties are non-persistent and query-specific, preventing the reuse of accumulated structural knowledge across multiple MPE queries on the same fixed graphical model. This exposes a fundamental limitation of existing GLS approaches: they lack a mechanism to amortize long-horizon experience.

To address both the short-sightedness of SLS and the non-transferability of GLS-style penalties, we introduce an \emph{amortized lookahead} strategy for neighbor selection~\citep{Meignan2015}. In the idealized setting where the optimal solution is known, the best intensification move from any state is one that reduces the Hamming distance of a state to the known optimal solution, as such moves reach the optimum in the minimum number of steps. However, this oracle information is unavailable at inference time, since the optimal assignment is precisely the quantity being sought. Our key observation is that this guidance can be approximated from solved instances. We therefore learn a neural model from (near-)optimal solutions that predicts which local moves are likely to reduce Hamming distance, enabling amortized inference that guides search toward globally promising regions across queries.

Concretely, we first obtain near-optimal solutions for a small set of training queries using an anytime MPE solver~\citep{marinescu2003systematic}. We then train an attention-based neural network to score candidate neighbors of a given assignment by estimating their likelihood of reducing Hamming distance to a high-quality solution. At inference time, these learned scores are combined with the standard best-improvement heuristic, enabling the search to balance immediate likelihood gains against long-term progress. This integration yields a reusable, \textit{amortized lookahead mechanism} for local search~\citep{Andrea-LS-look-97}. By distilling long-horizon search behavior into a learned policy, our approach improves exploration while sharing guidance across all future MPE queries on the same graphical model.

\eat{
The \emph{best-improvement} strategy is a widely used intensification heuristic for Stochastic Local Search (SLS) algorithms answering MPE queries. Given an assignment and a set of neighbors reachable by changing the assignment of variable \(X_i\) from \(u_i\) to \(v_i\), where \(u_i, v_i \in D_{X_i}\) and \(D_{X_i}\) is the domain of \(X_i\), the algorithm selects the neighbor that maximally improves the probability of the assignment under the graphical model. However, this approach suffers from two critical limitations. First, its myopic focus on immediate log-likelihood gains often causes the search to stall in poor local maxima~\citep{pmlr-vR2-kask99a}, particularly in high-dimensional spaces with large treewidth. Second, while heuristic enhancements like Guided Local Search (GLS)~\citep{Park-gls, gls+} attempt to escape these optima by penalizing visited states, they lack \emph{amortization}: the penalties learned during the search cannot be reused effectively for future queries. Consequently, GLS fails to transfer structural knowledge across repeated queries on the same fixed graph, forcing the solver to relearn the landscape's pitfalls from scratch every time.

To address both the short-sightedness of SLS and the transient nature of GLS-style penalties, we introduce a learned, amortized \texttt{lookahead} strategy for neighbor selection~\citep{Meignan2015}. The central intuition is that, if guidance toward a high-quality solution were available, prioritizing moves that reduce the Hamming distance to that solution would promote faster convergence than purely likelihood-based decisions. Although such information is unavailable at test time, it can be approximated from solved instances and distilled into a reusable policy.

Concretely, for a small set of training queries, we obtain near-optimal solutions using an anytime MPE solver~\citep{marinescu2003systematic}. Using these solutions as supervision, we train an attention-based neural network to score neighboring assignments of a given state, estimating the likelihood that a move will reduce the Hamming distance to a high-quality solution. During inference, the learned scores are combined with the standard best-improvement heuristic, allowing the solver to balance immediate likelihood gains with long-term progress. This integration yields an \texttt{amortized lookahead strategy} for local search~\citep{Andrea-LS-look-97}.

By distilling long-horizon search behavior into a persistent model, our framework enables local search to trade short-term greediness for improved global exploration, while amortizing this guidance across all future MPE queries on the same graphical model.
}

\eat{The \emph{best-improvement} strategy is a widely used intensification heuristic for answering MPE queries with local search algorithms~\citep{Park-gls, gls+}. Given an assignment and a set of neighbors reachable by changing the assignment of variable \(X_i\) from \(u_i\) to \(v_i\), where \(u_i, v_i \in D_{X_i}\) and \(D_{X_i}\) is the domain of \(X_i\), the algorithm selects the neighbor that maximally improves the probability of the assignment under the graphical model. However, this greedy strategy often becomes trapped in local maxima~\citep{pmlr-vR2-kask99a}, a limitation that is particularly severe in high-dimensional search spaces or models with large treewidth.

If the optimal or a near-optimal solution were known at inference time, minimizing the Hamming distance to this solution could serve as the optimal intensification heuristic, enabling the search to reach the optimum in the minimum number of steps. However, because the optimal solution is unavailable at test time, we propose learning a neural network that predicts which neighbor is most likely to reduce the Hamming distance. To this end, we introduce a \texttt{lookahead} strategy~\citep{Meignan2015} guided by neural networks to promote global convergence.

For a small set of MPE queries, we first generate near-optimal solutions using an anytime MPE solver~\cite{marinescu2003systematic}. We then train a neural network to score each neighbor of a given complete assignment, estimating the likelihood that moving to that neighbor will reduce the Hamming distance to the high-quality solution. At inference time, the trained network predicts these likelihoods for the current assignment and its neighbors. We combine the learned scores with those from the best-improvement heuristic to balance greedy local gains with long-term progress. This integration yields an \texttt{amortized lookahead strategy} for neighbor selection in local search algorithms~\citep{Andrea-LS-look-97}.}

\paragraph{Contributions.} This paper makes the following contributions:

\begin{itemize}
    \item We introduce an amortized lookahead framework that formulates neighbor selection in local search for MPE inference as a supervised learning problem, explicitly balancing short-term likelihood improvement with long-term convergence.
    \item We propose a principled data generation strategy that leverages anytime MPE solvers and Hamming distance–based supervision to label neighboring assignments by their likelihood of reducing distance to a near-optimal solution.
    \item We demonstrate that augmenting SLS and GLS+ with our learned lookahead strategy consistently improves solution quality and efficiency on large and challenging graphical models.
\end{itemize}

\section{Background}

\paragraph{Notation:} We denote random variables by uppercase letters (e.g., \( X \)), and their assignments by corresponding lowercase letters (e.g., \( x \)). 
Bold uppercase letters (e.g.,~$\mathbf{X}$) denote sets of variables, while bold lowercase letters (e.g.,~$\mathbf{x}$) denote corresponding value assignments. Given a full assignment~$\mathbf{x}$ to~$\mathbf{X}$ and a subset~$\mathbf{Y} \subseteq \mathbf{X}$, we denote by~$\mathbf{x}_{\mathbf{Y}}$ the projection of~$\mathbf{x}$ onto~$\mathbf{Y}$.




A \textbf{probabilistic graphical model} (PGM) is a triple $\mathcal{M} = \langle \mathbf{X}, \mathbf{F}, G \rangle$, where (1) $\mathbf{X} = \{X_1, \dots, X_n\}$ is a set of discrete random variables; (2) each $X_i$ takes values from a finite domain $D_{X_i}$; (3) $\mathbf{F} = \{f_1, \dots, f_m\}$ is a collection of log-potential functions such that each $f_i$ is defined over a subset $\mathbf{S}(f_i) \subseteq \mathbf{X}$ (its \emph{scope}); and (4) $G = (\mathcal{V}, \mathcal{E})$ is the undirected primal graph with $\mathcal{V} = \mathbf{X}$ and an edge $(X_a, X_b) \in \mathcal{E}$ whenever $X_a$ and $X_b$ co-occur in the scope of some $f_i \in \mathbf{F}$. The model induces the joint distribution
\[
P_{\mathcal{M}}(\mathbf{x}) \propto \exp\!\left( \sum_{f \in \mathbf{F}} f(\mathbf{x}_{\mathbf{S}(f)}) \right).
\]

We focus on the \textbf{most probable explanation} (MPE) task: given observed evidence variables $\mathbf{E} \subset \mathbf{X}$ with assignment $\mathbf{e}$, find the most likely assignment to the query variables $\mathbf{Q} = \mathbf{X} \setminus \mathbf{E}$. Formally,
\begin{align}
\text{MPE}(\mathbf{Q}, \mathbf{e}) 
&= \arg\max_{\mathbf{q}} \;\text{ln}\Big(P_{\mathcal{M}}(\mathbf{q} \mid \mathbf{e})\Big) \nonumber \\
&= \arg\max_{\mathbf{q}} \sum_{f \in \mathbf{F}} f\big((\mathbf{q}, \mathbf{e})_{\mathbf{S}(f)}\big)
\end{align}
This problem is NP-hard in general and remains intractable for many expressive model classes~\citep{cooper_1990_complexityprobabilistic, park&darwiche04, decamposNewComplexityResultsMAPBayesianNetworks, conaty17}.

Exact algorithms for MPE include \emph{bucket elimination}~\citep{dechter99}, which performs variable elimination via local reparameterization; \emph{AND/OR Branch-and-Bound} framework~\citep{AOBB}, a search-based approach that exploits graphical structure through AND/OR search spaces and soft arc consistency methods, implemented in solvers such as \textsc{DAOOPT}~\citep{marinescu2010daoopt} \& \textsc{Toulbar2}~\citep{givry:hal-04021879} respectively that operate in an \emph{anytime} manner, progressively refining feasible solutions and bounds and providing certificates of optimality upon termination.

In contrast, local search methods operate over complete assignments through iterative refinement. Given a current assignment \( \mathbf{x} \), candidate updates are generated by exploring its \emph{neighborhood}. We focus on the standard \emph{1-flip neighborhood}, defined as $\mathcal{N}(\mathbf{x}) = \{\mathbf{x}' : \mathbf{x}' \text{ differs from } \mathbf{x} \text{ in exactly one query variable } Q_i \in \mathbf{Q}\}$, where the selected variable \( Q_i \) is reassigned to a different value in its domain.

A variety of local search strategies have been proposed for MPE inference. Penalty-based metaheuristics, such as Guided Local Search (GLS)~\citep{Park-gls} and its enhanced variant GLS+~\citep{gls+}, combine greedy or stochastic best-improvement moves with adaptive penalties to escape local optima. Structure-aware heuristics further exploit graphical topology: Subtree-Tree Local Search (STLS)~\citep{Milchgrub2014} searches over cutset variables while solving the remaining tree-structured subproblem exactly, while more recently a variable neighborhood search framework guided by tree decompositions that progressively expands neighborhood complexity to improve solution quality was proposed~\citep{Ouali2020}.

Heuristics are central to the effectiveness of local search methods in combinatorial optimization, where large and structured search spaces make exhaustive exploration infeasible. Across a wide range of satisfaction and optimization problems, carefully designed neighbor-selection strategies play a crucial role in balancing greedy improvement with exploration of promising regions, directly influencing convergence speed and solution quality~\citep{bg-ls-maxsat, Li2005, Balint2012, Cai2021}.

Recently, learning-based approaches have emerged as a powerful mechanism for augmenting local search heuristics by amortizing guidance from past search trajectories to inform and improve future optimization decisions. Prior work has explored neural and adaptive strategies for this purpose, including regret-based penalties for local search in the traveling salesperson problem~\citep{sui-gls-tsp}, learned neighborhood selection policies within Large Neighborhood Search frameworks for MaxSAT~\citep{Hickey2022}, and bandit-based methods that adaptively select constraints when search stalls~\citep{Zheng2022}. Together, these works show a growing trend of learning to amortize search, improving convergence and solution quality.


The challenges addressed by these approaches closely parallel those encountered in MPE inference for PGMs, where high-dimensional structured spaces and complex dependencies give rise to numerous poor local optima. As in other combinatorial settings, effective MPE inference relies critically on neighbor-selection heuristics that balance immediate objective improvement with exploration of promising long-term trajectories. Motivated by this connection, we propose a neural lookahead framework that evaluates candidate moves not only by their instantaneous likelihood gain but also by their estimated long-term potential, addressing a core limitation of greedy local search strategies.

\section{Learning to Guide Local Search}
\label{sec:method}


A widely used greedy approach in local search is the \emph{best-improvement} heuristic, which selects the neighbor $\mathbf{x}' \in \mathcal{N}(\mathbf{x})$ of $\mathbf{x}$ that maximizes the immediate gain in the objective. More formally, given $\mathbf{x}$, define the log-potential function $F$ as
\[
F(\mathbf{x}) = \sum_{f \in F} f(\mathbf{x}_{S(f)}),
\]
and the log-likelihood gain of moving to a neighbor $\mathbf{x}'$ as
\[
\mathcal{S}_{\mathrm{LL}}(\mathbf{x}'\mid \mathbf{x}) = F(\mathbf{x}') - F(\mathbf{x}).
\]
The best-improvement heuristic selects a neighbor $\mathbf{x}' \in \mathcal{N}(\mathbf{x})$ that maximizes $\mathcal{S}_{\mathrm{LL}}(\mathbf{x}'\mid \mathbf{x})$.
This rule is attractive because $\mathcal{S}_{\mathrm{LL}}$ is exact and always available at test time.
However, it is inherently myopic: repeated greedy updates often trap the search in poor local optima and prevent exploration of more promising regions.

\paragraph{A distance-based view of progress.}
To reason about sustained progress, it is useful to temporarily separate objective improvement from trajectory quality and ask a simpler question: does the search move toward an optimal MPE solution?
Let $\mathbf{x}^\ast$ denote an optimal MPE assignment.
To measure the distance between two assignments $\mathbf{x}$ and $\mathbf{z}$, we use the Hamming distance $d_H(\mathbf{x},\mathbf{z})$, defined as the number of query variables $Q_i \in \mathbf{Q}$ on which they assign different values (the evidence variables are fixed and therefore identical in both assignments).



Suppose we had access to an oracle that outputs $d_H(\mathbf{x}, \mathbf{x}^\ast)$. Then, given a starting assignment $\mathbf{x}$, a 1-flip local search could always select a neighbor $\mathbf{x}' \in \mathcal{N}(\mathbf{x})$ that decreases the distance by 1, converging in exactly $d_H(\mathbf{x}, \mathbf{x}^\ast)$ steps, which is optimal. However, strict optimality is not required for convergence. As we show in the following theorem, even a noisy oracle which selects a \textit{distance-reducing move} with probability $\alpha > 0.5$ ensures that the search converges in bounded expected time:

\begin{theorem}[Convergence]
\label{thm:drift}
Let $\{\mathbf{x}_t\}_{t \geq 0}$ be the sequence of assignments generated by 1-flip local search, with $\mathbf{x}_t$ denoting the state at step $t$.
Assume the process is absorbing at the target assignment, i.e., if $\mathbf{x}_t = \mathbf{x}^\ast$ then $\mathbf{x}_{t+1} = \mathbf{x}^\ast$.
Define
$h_t := d_H(\mathbf{x}_t,\mathbf{x}^\ast)$
as the Hamming distance to $\mathbf{x}^\ast$, and let
$\tau^\ast := \inf\{t \ge 0 : h_t = 0\}$
be the hitting time of $\mathbf{x}^\ast$.
If at every non-terminal state the executed move reduces $h_t$ with probability $\alpha > 1/2$, then
\[
\mathbb{E}[h_{t+1} \mid \mathbf{x}_t] \le h_t - (2\alpha-1),
\quad \text{and} \quad
\mathbb{E}[\tau^\ast] \le \tfrac{h_0}{2\alpha-1}.
\]
\end{theorem}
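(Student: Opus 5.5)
The argument has two parts: a one-step drift bound, followed by a standard additive-drift (optional stopping) argument for the hitting time.

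\textbf{One-step drift.} In 1-flip local search, each executed move changes exactly one query variable, so $|h_{t+1}-h_t|\le 1$. If the move sets $Q_i$ to $x^\ast_i$, then $h$ decreases by one. If $Q_i$ moves between two wrong values, or a correct $Q_i$ is changed to a wrong value, then $h$ stays the same or increases by one. At a non-terminal state ($h_t\ge 1$), the hypothesis says that with probability $\alpha$ we have $h_{t+1}=h_t-1$, and otherwise $h_{t+1}\le h_t+1$. Therefore
\[
\mathbb{E}[h_{t+1}\mid \mathbf{x}_t]\le \alpha(h_t-1)+(1-\alpha)(h_t+1)=h_t-(2\alpha-1).
\]
Two points need care here. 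First, the hypothesis should be read as ``reduces $h_t$ with probability \emph{at least} $\alpha$''; a larger reduction probability only helps, because reducing gives $-1$ and the alternative gives at most $+1$. Second, conditioning is on $\mathbf{x}_t$, or on the full history, since the rule may depend on past states as in GLS penalties. Provided the bound holds given the full history $\mathcal{F}_t$, the argument below is unchanged.

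\textbf{Hitting time.} Let $\delta=2\alpha-1>0$ and define $Y_t = h_{t\wedge\tau^\ast}+\delta\,(t\wedge\tau^\ast)$. On $\{\tau^\ast>t\}$ the drift bound gives $\mathbb{E}[Y_{t+1}\mid\mathcal{F}_t]\le Y_t$. On $\{\tau^\ast\le t\}$ the process is frozen by absorption together with the stopping, so $Y_{t+1}=Y_t$. Hence $Y_t$ is a supermartingale, and
\[
\delta\,\mathbb{E}[t\wedge\tau^\ast]\le \mathbb{E}[Y_t]\le Y_0=h_0,
\]
using $h\ge 0$. Letting $t\to\infty$ and applying monotone convergence yields $\mathbb{E}[\tau^\ast]\le h_0/\delta$. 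As a byproduct, $\tau^\ast<\infty$ almost surely.

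The main obstacle is justifying the limit, that is, the optional-stopping step. Working with the truncated time $t\wedge\tau^\ast$ and monotone convergence avoids needing integrability of $\tau^\ast$ in advance. The bounded increments $|h_{t+1}-h_t|\le1$ and the nonnegativity of $h$ are the only properties required. If the drift bound is only known conditional on $\mathbf{x}_t$ and the process is not Markov, I would add the standing assumption that the move rule is Markov in $\mathbf{x}_t$, or state the hypothesis conditionally on $\mathcal{F}_t$, so that the supermartingale property is legitimate.
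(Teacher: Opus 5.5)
Your proof is correct. It has the same skeleton as the paper's: a one-step drift bound of $-(2\alpha-1)$, then an optional-stopping argument on $h_t+(2\alpha-1)t$. Where the two differ, yours is the more careful and more general one.

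In the drift step, the paper assumes every non-reducing move increases $h$ by exactly one, asserting that each flip changes $d_H$ by exactly $1$, and so obtains the drift as an equality. That assumption fails in two cases:
\begin{itemize}
\item In non-binary domains, a wrong value can be changed to another wrong value, leaving $h$ unchanged.
\item At $h_t=|\mathbf{Q}|$, no move can increase the distance, so ``increases with probability $1-\alpha$'' cannot hold there for $\alpha<1$.
\end{itemize}
Your bound $h_{t+1}\le h_t+1$, combined with the ``at least $\alpha$'' reading, covers both cases.

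In the hitting-time step, the paper writes $\mathbb{E}[h_{\tau^\ast}]=\mathbb{E}[h_0-(2\alpha-1)\tau^\ast]$ by appeal to the law of total expectation. This is a Wald/optional-stopping identity, and it needs a condition such as $\mathbb{E}[\tau^\ast]<\infty$, which is essentially what is being proved. The paper then turns the resulting equality into an upper bound informally. Your stopped supermartingale $Y_t$, truncated at $t\wedge\tau^\ast$ and passed to the limit by monotone convergence, gets the inequality without that presupposition. It also gives $\tau^\ast<\infty$ almost surely as a byproduct.

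Your point about conditioning on $\mathcal{F}_t$ rather than $\mathbf{x}_t$ is a genuine issue the paper does not address. It matters for GLS+, whose penalties make the move rule depend on history.

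The paper's version is shorter, but as written it is a heuristic sketch of the argument you make rigorous.
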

Theorem~\ref{thm:drift} states that for reliable progress, the search must select a distance-reducing move more often than not ($\alpha > 0.5$).
Since our method selects the neighbor with the highest predicted score, $\alpha$ is a top-1 ranking statistic representing the probability that the selected move reduces the Hamming distance.
Defining the set of distance-reducing neighbors as $\mathcal{N}_{\downarrow}(\mathbf{x}) := \{\mathbf{z} \in \mathcal{N}(\mathbf{x}) : d_H(\mathbf{z}, \mathbf{x}^\ast) = d_H(\mathbf{x}, \mathbf{x}^\ast) - 1\}$, we formalize $\alpha$ as:
\[
\alpha
=
\Pr\!\left[\mathbf{x}_{t+1} \in \mathcal{N}_{\downarrow}(\mathbf{x}_t) \mid d_H(\mathbf{x}_t, \mathbf{x}^\ast) > 0\right]
\]
Thus, strict score calibration is unnecessary; the learned policy need only rank distance-reducing moves above non-reducing ones often enough to ensure $\alpha > 1/2$ along the trajectory.

\paragraph{Learning a surrogate oracle.}
In practice, since the optimal assignment $\mathbf{x}^\ast$ is typically unavailable, we supervise the model using a reference assignment $\hat{\mathbf{x}}$ produced by an anytime solver.
We label a neighbor $\mathbf{x}' \in \mathcal{N}(\mathbf{x})$ as positive ($y=1$) if it reduces the Hamming distance to $\hat{\mathbf{x}}$ (i.e., $d_H(\mathbf{x}', \hat{\mathbf{x}}) < d_H(\mathbf{x}, \hat{\mathbf{x}})$), and negative ($y=0$) otherwise.
We train a neural network to output a score $\hat{p}_{\downarrow}(\mathbf{x}, \mathbf{x}')$ that estimates the probability $\Pr(y=1 \mid \mathbf{x}, \mathbf{x}')$.
At test time, the search executes the top-ranked move:
\[
\mathbf{x}_{t+1} = \arg\max_{\mathbf{z} \in \mathcal{N}(\mathbf{x}_t)} \hat{p}_{\downarrow}(\mathbf{x}_t, \mathbf{z}).
\]

Section~\ref{subsec:data} details the generation of $\hat{\mathbf{x}}$ and the collection of training states to match the inference distribution.

\eat{\section{Learning to Guide Local Search}
\label{sec:method}

A central decision in local search is \emph{neighbor selection}: given a current assignment
$\mathbf{x}$ and its 1-flip neighborhood $\mathcal{N}(\mathbf{x})$, the search chooses a single
candidate $\mathbf{x}'\in\mathcal{N}(\mathbf{x})$ to execute next.
This choice determines the trajectory of the search and, in practice, often dominates overall
solution quality.
Common strategies include greedy improvement, stochastic sampling, and probabilistic acceptance.

A widely used baseline is the \emph{best-improvement} rule, which selects the
neighbor that maximizes the immediate gain in the objective.
Define the log-potential function
\[
F(\mathbf{x}) = \sum_{f \in \mathbf{F}} f(\mathbf{x}_{\mathbf{S}(f)}),
\]
and the log-likelihood gain of moving from $\mathbf{x}$ to $\mathbf{x}'$ as
\[
\mathcal{S}_{\mathrm{LL}}(\mathbf{x}'\mid \mathbf{x}) = F(\mathbf{x}') - F(\mathbf{x}).
\]
Best-improvement chooses a neighbor $\mathbf{z}$ with the largest positive
$\mathcal{S}_{\mathrm{LL}}(\mathbf{z}\mid \mathbf{x})$.
This rule is attractive because $\mathcal{S}_{\mathrm{LL}}$ is exact and always available at test
time.
However, it is inherently myopic: a move can yield the best immediate gain while steering the
trajectory into a region where subsequent moves have low utility.
Empirically, repeated greedy updates often trap the search in poor local optima.

\paragraph{A distance-based view of progress.}
To reason about sustained progress, it is useful to temporarily separate objective improvement
from trajectory quality and ask a simpler question: does the search move toward an optimal MPE
solution?
Let $\mathbf{x}^\ast$ denote an optimal MPE assignment.
Since the evidence is fixed, two assignments can differ only on the query variables $\mathbf{Q}$.
We define the Hamming distance restricted to $\mathbf{Q}$ as
$
d_H(\mathbf{x},\mathbf{x}')
=
\bigl|\{Q_i \in \mathbf{Q} : x_{Q_i} \neq x'_{Q_i}\}\bigr|$.
Under a 1-flip neighborhood, each executed move changes exactly one query variable, and therefore
the distance to $\mathbf{x}^\ast$ changes by exactly one at each step.
This yields a simple abstraction: local search induces a random walk in the distance-to-optimum.

If $\mathbf{x}^\ast$ were available, an oracle neighbor selector could always choose a
distance-reducing move whenever possible, reaching $\mathbf{x}^\ast$ in exactly
$d_H(\mathbf{x}_0,\mathbf{x}^\ast)$ steps, which is optimal.
In the regimes of interest, $\mathbf{x}^\ast$ is unknown and often computationally intractable to
obtain.
The key question is whether we can still guarantee progress when neighbor selection only
\emph{approximates} this oracle, meaning it is merely biased toward distance-reducing moves.

\begin{theorem}[Convergence]
\label{thm:drift}
Let $\{\mathbf{x}_t\}_{t\ge 0}$ be a one-flip local search process over the query variables and
assume it is absorbing at $\mathbf{x}^\ast$.
Define $h_t := d_H(\mathbf{x}_t,\mathbf{x}^\ast)$ and the hitting time
$\tau^\ast := \inf\{t\ge 0 : h_t = 0\}$.
Assume that for every state with $h_t>0$, the executed move reduces the distance by one with
probability at least $\alpha$, that is,
\[
\Pr[h_{t+1}=h_t-1 \mid \mathbf{x}_t] \ge \alpha
\qquad \text{for some } \alpha>\tfrac{1}{2}.
\]
Then the expected distance decreases by a constant drift:
\[
\mathbb{E}[h_{t+1}\mid \mathbf{x}_t] \le h_t - (2\alpha-1),
\]
and the expected time to reach $\mathbf{x}^\ast$ is bounded by
\[
\mathbb{E}[\tau^\ast] \le \frac{h_0}{2\alpha-1}.
\]
\end{theorem}

Theorem~\ref{thm:drift} isolates the requirement for reliable progress: neighbor selection must be
biased toward distance-reducing moves, quantified by $\alpha>\tfrac{1}{2}$.
Crucially, $\alpha$ is a property of the \emph{executed decision rule}, not of any particular
training loss.
In our method, the executed move is the single neighbor with the highest predicted score, so
$\alpha$ is naturally a top-1 ranking quantity:
\[
\alpha
=
\Pr\!\Bigl[
\arg\max_{\mathbf{z}\in\mathcal{N}(\mathbf{x})} \mathrm{score}(\mathbf{x},\mathbf{z})
\ \text{is distance-reducing}
\Bigr].
\]
Thus, what matters for drift is not calibration under a fixed threshold, but whether
\emph{the search policy selects a distance-reducing move more often than not, which in our setting is equivalent to asking whether the highest-scoring neighbor is distance-reducing with probability at least $\alpha>1/2$ along the realized search trajectory}.

Theorem~\ref{thm:drift} makes explicit what is needed for sustained progress: the executed neighbor
choice must be biased toward moves that reduce the distance to an optimal MPE assignment, quantified
by a single parameter $\alpha>\tfrac{1}{2}$.
Importantly, $\alpha$ characterizes the \emph{behavior of the search policy that is actually
executed at test time}, independent of the particular surrogate loss used for training.

Because we use a one-flip neighborhood, every step changes exactly one query variable, and hence
the Hamming distance to $\mathbf{x}^\ast$ changes by exactly one.
Thus, whenever $h_t:=d_H(\mathbf{x}_t,\mathbf{x}^\ast)>0$, the next state must satisfy
$h_{t+1}\in\{h_t-1,\;h_t+1\}$.
We therefore define $\alpha$ directly as the conditional probability of taking a distance-reducing
step:
\[
\alpha
:=
\Pr\!\left[
d_H(\mathbf{x}_{t+1},\mathbf{x}^\ast)=d_H(\mathbf{x}_t,\mathbf{x}^\ast)-1
\ \middle|\ d_H(\mathbf{x}_t,\mathbf{x}^\ast)>0
\right].
\]
Equivalently, letting
\[
\mathcal{N}_{\downarrow}(\mathbf{x})
:=
\{\mathbf{z}\in\mathcal{N}(\mathbf{x}) : d_H(\mathbf{z},\mathbf{x}^\ast)=d_H(\mathbf{x},\mathbf{x}^\ast)-1\},
\]
we have
\[
\alpha
=
\Pr\!\left[\mathbf{x}_{t+1}\in \mathcal{N}_{\downarrow}(\mathbf{x}_t)\ \middle|\ d_H(\mathbf{x}_t,\mathbf{x}^\ast)>0\right].
\]
In our method, $\mathbf{x}_{t+1}$ is obtained by scoring candidates and executing the top-ranked
neighbor, i.e.,
\[
\mathbf{x}_{t+1}=\arg\max_{\mathbf{z}\in\mathcal{N}(\mathbf{x}_t)} \hat{p}_{\downarrow}(\mathbf{x}_t,\mathbf{z}),
\]
so $\alpha$ is a top-1 ranking quantity: it is exactly the probability that the highest-scoring
neighbor lies in the distance-reducing set $\mathcal{N}_{\downarrow}(\mathbf{x}_t)$.
From this perspective, the role of learning is to induce an ordering over neighbors such that
distance-reducing moves are ranked above non-reducing moves often enough along the trajectory to
ensure $\alpha>\tfrac{1}{2}$.
This is why calibration under a fixed threshold is not the key requirement for drift; what matters
is whether the executed policy selects a distance-reducing move more often than not on the states
it encounters.

\paragraph{Learning a surrogate oracle.}
In practice, the optimal assignment $\mathbf{x}^\ast$ is unknown, and for large instances it is
often unavailable even during training.
We therefore learn a surrogate for the distance-reducing oracle using high-quality assignments
produced by an anytime MPE solver.
For each training query, we compute a reference assignment $\hat{\mathbf{x}}$ and label each
neighbor $\mathbf{x}'\in\mathcal{N}(\mathbf{x})$ as positive if it reduces Hamming distance to
$\hat{\mathbf{x}}$ and negative otherwise:
\[
y(\mathbf{x},\mathbf{x}') \;=\; \mathbb{I}\!\left[d_H(\mathbf{x}',\hat{\mathbf{x}}) < d_H(\mathbf{x},\hat{\mathbf{x}})\right].
\]
Using these labels, we train a neural classifier that, given a current assignment $\mathbf{x}$ and a
candidate neighbor $\mathbf{x}'$, outputs
\[
\hat{p}_{\downarrow}(\mathbf{x}, \mathbf{x}')
\;=\;
\Pr\!\left(y(\mathbf{x},\mathbf{x}')=1 \,\middle|\, \mathbf{x}, \mathbf{x}'\right)\in[0,1],
\]
which can be interpreted as the probability that the move to $\mathbf{x}'$ is distance-reducing with
respect to the reference $\hat{\mathbf{x}}$ used for supervision.
At inference time, we execute the top-ranked move:
\[
\mathbf{x}_{t+1} \;=\; \arg\max_{\mathbf{z}\in\mathcal{N}(\mathbf{x}_t)} \hat{p}_{\downarrow}(\mathbf{x}_t,\mathbf{z}).
\]
Section~\ref{subsec:data} describes how $\hat{\mathbf{x}}$ is generated and how training states are
collected to better match the distribution encountered at inference time.
}

\eat{\section{Learning to Guide Local Search}
\label{sec:method}

A central step in local search is \emph{neighbor selection}: given the current assignment $\mathbf{x}$ and its neighborhood $\mathcal{N}(\mathbf{x})$, the algorithm chooses a candidate $\mathbf{x}' \in \mathcal{N}(\mathbf{x})$ according to a search strategy. Common choices include greedy improvement, stochastic sampling, or probabilistic acceptance. This choice governs the search trajectory and strongly influences convergence. The most common strategy is the \emph{best-improvement} rule~\citep{Aloise2025}, which selects the neighbor that maximizes the immediate gain in the objective. Define the log-potential function as
\[
F(\mathbf{x}) = \sum_{f \in \mathbf{F}} f(\mathbf{x}_{\mathbf{S}(f)}),
\]
and let the log-likelihood gain of moving from $\mathbf{x}$ to $\mathbf{x}'$ be
\[
\mathcal{S}_{\mathrm{LL}}(\mathbf{x}'\mid \mathbf{x}) = F(\mathbf{x}') - F(\mathbf{x}).
\]
The best-improvement strategy then chooses the neighbor $\mathbf{z}$ with the largest positive $\mathcal{S}_{\mathrm{LL}}(\mathbf{z}\mid \mathbf{x})$. While this guarantees immediate progress in $F$, it is inherently myopic: repeated greedy updates often trap the search in poor local optima and prevent exploration of more promising regions.



\paragraph{A distance-based view.}
Let $\mathbf{x}^\ast$ be an optimal MPE solution. To measure the distance between two assignments $\mathbf{x}$ and $\mathbf{x}'$, we use the Hamming distance $d_H(\mathbf{x},\mathbf{x}')$, defined as the number of query variables $Q_i \in \mathbf{Q}$ on which they assign different values (the evidence variables are fixed and therefore identical in both assignments).



If $\mathbf{x}^\ast$ were available, the ideal strategy would be to always choose a neighbor that decreases $d_H(\mathbf{x}, \mathbf{x}^\ast)$ by one. In this case, $\mathbf{x}^\ast$ would be reached in exactly $d_H(\mathbf{x}, \mathbf{x}^\ast)$ flips, the minimum possible. This oracle strategy guarantees systematic convergence.

Since the optimal assignment $\mathbf{x}^\ast$ is not available in practice, we consider the setting where neighbor selection only approximates the distance-reducing oracle. Specifically, assume that at each non-optimal state, the chosen move reduces the Hamming distance with probability $\alpha$. The next result shows that whenever $\alpha > 1/2$, the expected distance decreases monotonically and convergence occurs in bounded time. Thus, even an imperfect predictor biased toward distance-reducing moves suffices for guaranteed progress. In our method, this predictor is instantiated by a neural network. Importantly, Theorem~\ref{thm:drift} holds for any fixed reference assignment $\mathbf{x}^\dagger$ (not necessarily globally optimal) as long as the policy selects a neighbor that decreases $d_H(\cdot,\mathbf{x}^\dagger)$ with probability at least $\alpha>\tfrac{1}{2}$; in our setting we take $\mathbf{x}^\dagger=\hat{\mathbf{x}}$ from the anytime solver used for supervision.

\begin{theorem}[Convergence]
\label{thm:drift}
Let $\{\mathbf{x}_t\}_{t \geq 0}$ be the sequence of assignments generated by one-flip local search, with $\mathbf{x}_t$ denoting the state at step $t$.
Assume the process is absorbing at the target assignment, i.e., if $\mathbf{x}_t = \mathbf{x}^\dagger$ then $\mathbf{x}_{t+1} = \mathbf{x}^\dagger$.
Define
$h_t := d_H(\mathbf{x}_t,\mathbf{x}^\dagger)$
as the Hamming distance to $\mathbf{x}^\dagger$, and let
$\tau^\dagger := \inf\{t \ge 0 : h_t = 0\}$
be the hitting time of $\mathbf{x}^\dagger$.
If at every non-terminal state the executed move reduces $h_t$ with probability $\alpha > 1/2$, then
\[
\mathbb{E}[h_{t+1} \mid \mathbf{x}_t] \le h_t - (2\alpha-1),
\qquad \text{and} \qquad
\mathbb{E}[\tau^\dagger] \le \tfrac{h_0}{2\alpha-1}.
\]
\end{theorem}



\eat{
The key parameter $\alpha$ can be expressed in terms of the true fraction $p$ of distance-reducing neighbors and the classifier’s \emph{true positive rate (TPR)} and \emph{false positive rate (FPR)}:
\[
\alpha = \frac{p \cdot \mathrm{TPR}}{p \cdot \mathrm{TPR} + (1-p) \cdot \mathrm{FPR}}.
\]
Hence, achieving $\alpha > 1/2$ requires a predictor with sufficiently high TPR and low FPR. Intuitively, the classifier must identify distance-reducing neighbors reliably while avoiding spurious selections. Under this condition, the expected Hamming distance decreases steadily, and the expected hitting time is bounded by Theorem~\ref{thm:drift}.}

The drift condition in Theorem~\ref{thm:drift} is controlled by the probability $\alpha$ that the \emph{executed} move is distance-reducing, i.e., that the chosen neighbor decreases $d_H(\cdot,\mathbf{x}^\dagger)$ by one.
Since our inference-time rule selects the single neighbor with the highest predicted score $\hat{p}_{\downarrow}(\mathbf{x},\mathbf{x}')$ over the neighborhood $\mathcal{N}(\mathbf{x})$, $\alpha$ depends on a simple ordering property: the highest-scoring neighbor should be one that truly reduces the distance.
In other words, what matters is whether distance-reducing moves tend to receive higher scores than non-reducing moves within each neighborhood, not how well the model classifies \emph{all} neighbors under a fixed threshold.
When this ordering is sufficiently reliable so that $\alpha>1/2$ along the relevant portion of the trajectory, the expected Hamming distance decreases steadily and the expected hitting time is bounded by Theorem~\ref{thm:drift}.


\paragraph{Learning a surrogate.}
The analysis above shows that reliable progress requires a predictor with high TPR and low FPR. We approximate the ideal distance-reducing oracle with a neural surrogate trained on solved MPE instances. For a current assignment $\mathbf{x}$ and a neighbor $\mathbf{x}' \in \mathcal{N}(\mathbf{x})$, the classifier estimates the probability that flipping to $\mathbf{x}'$ reduces the Hamming distance to the optimum:
\[
\hat{p}_{\downarrow}(\mathbf{x}, \mathbf{x}') \;=\; 
\Pr\!\big[d_H(\mathbf{x}', \mathbf{x}^\ast) = d_H(\mathbf{x}, \mathbf{x}^\ast) - 1 \,\big|\, \mathbf{x}, \mathbf{x}'\big].
\]
At inference time, we select the neighbor with the highest predicted probability:
\[
\mathbf{x}' \;=\; \arg\max_{\mathbf{z} \in \mathcal{N}(\mathbf{x})} \hat{p}_{\downarrow}(\mathbf{x}, \mathbf{z}).
\]
This neural surrogate directly instantiates the oracle approximation, ensuring that neighbor selection favors moves predicted to reduce distance and thereby guiding the search more effectively than hand-crafted heuristics.
}

\subsection{Data Generation for Neighbor Scoring}\label{subsec:data}

\begin{algorithm}[tb]
\caption{Data Collection}
\label{algo:data}
\begin{algorithmic}
\footnotesize
   \STATE {\bfseries Input:} A PGM $\mathcal{M}$, query ratio $qr$, anytime solver budget $B$, local search step limit $stl$, number of samples $N$
   \STATE {\bfseries Output:} Dataset $DB$
   \STATE

   \STATE \textbf{function} \textsc{SolveMPEAnytime}($\mathcal{M}, \mathbf{e}, B$)
   \STATE \quad Run an anytime MPE solver on $\mathcal{M}$ with evidence $\mathbf{e}$ 
   \STATE \quad and time budget $B$
   \STATE \quad \textbf{return} high-quality assignment $\hat{\mathbf{x}}$
   \STATE \textbf{end function}

   \STATE
   \STATE Initialize $DB \gets \emptyset$
   \FOR{$i=1$ {\bfseries to} $N$}
       \STATE Sample $\mathbf{x} \sim P_{\mathcal{M}}$
       \STATE $\mathbf{Q} \gets$ random subset of $\mathbf{X}$ of size $qr \cdot |\mathbf{X}|$
       \STATE $\mathbf{E} \gets \mathbf{X} \setminus \mathbf{Q}$
       \STATE $\mathbf{e} \gets \mathbf{x}_{\mathbf{E}}$

       \STATE $\hat{\mathbf{x}} \gets \textsc{SolveMPEAnytime}(\mathcal{M}, \mathbf{e}, B)$

       \STATE Run local search with evidence $\mathbf{e}$ and step limit $stl$ to collect states $\mathbf{S}$ and neighbors $\mathcal{N}(\mathbf{x})$ for each $\mathbf{x} \in \mathbf{S}$

       \FORALL{$\mathbf{x} \in \mathbf{S}$}
           \STATE $\mathcal{D}_{x} \gets \text{List of pairs}$
           \FORALL{$\mathbf{x'} \in \mathcal{N}(\mathbf{x})$}
               \STATE $label \gets \mathbb{I}[d_{H}(x^{\prime},\hat{x})<d_{H}(x,\hat{x})]$
               \STATE $\mathcal{D}_{x} \gets \mathcal{D}_{x}\cup\{(x', label)\}$
           \ENDFOR
           \STATE $\text{record} \gets \{\mathbf{e}, \mathbf{x}, \mathcal{D}_{x}\}$
           \STATE $DB \gets DB \cup \{\text{record}\}$
       \ENDFOR
   \ENDFOR
   \STATE \textbf{return} $DB$
\end{algorithmic}
\end{algorithm}


Solving the MPE problem exactly is computationally intractable for large, high-treewidth networks, making the true optimal assignment $\mathbf{x}^\ast$ unavailable in practice. To overcome this, we supervise learning using \emph{high-quality approximate solutions} $\hat{\mathbf{x}}$ obtained from anytime MPE solvers.

\begin{figure*}[thbp!]
    \centering
    \includegraphics[width=\linewidth]{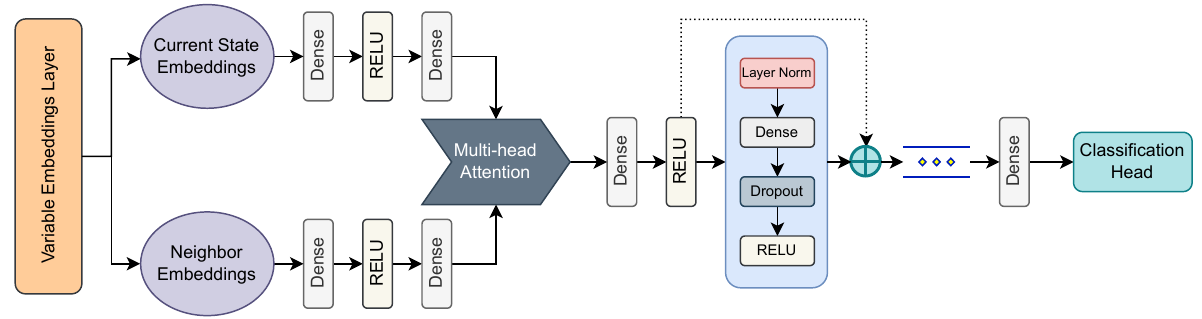}
    \caption{Attention-based neural architecture for scoring 1-flip neighbors by their predicted utility in reducing the Hamming distance to a high-quality assignment.}
    \label{fig:architecture}
\end{figure*}

Algorithm~\ref{algo:data} describes the data collection procedure. Given a probabilistic graphical model \(\mathcal{M}\) and a query ratio \(qr \in (0,1)\), we construct an MPE query by randomly selecting a subset \( \mathbf{Q} \subset \mathbf{X} \) such that \(|\mathbf{Q}| = qr \cdot |\mathbf{X}|\). The remaining variables form the evidence set \(\mathbf{E} = \mathbf{X} \setminus \mathbf{Q}\). We then sample a random complete assignment \(\mathbf{x}\) over \(\mathbf{X}\) and project it onto \(\mathbf{E}\) to generate the observed evidence.

We then compute a near-optimal assignment $\hat{\mathbf{x}}$ by running an anytime MPE solver on the constructed query within a fixed time budget $B$.  
Next, we collect a set of training states $\mathbf{S}$ by running a local search algorithm to explore the assignment space; this yields a diverse mix of high- and low-probability states under $P_{\mathcal{M}}(\mathbf{x})$, helping the model generalize beyond greedy regions.

For each state $\mathbf{x} \in \mathbf{S}$ and its 1-flip neighborhood $\mathcal{N}(\mathbf{x})$, we label every neighbor $\mathbf{x}' \in \mathcal{N}(\mathbf{x})$ as $1$ if $d_H(\mathbf{x}',\hat{\mathbf{x}}) < d_H(\mathbf{x},\hat{\mathbf{x}})$ and $0$ otherwise.
These labels indicate whether a single flip moves closer to the approximate optimum. Training the neural network on this data yields a probabilistic scorer $\hat{p}_{\downarrow}(\mathbf{x},\mathbf{x}')$ estimating the chance that a candidate move decreases Hamming distance.

At inference, the search simply selects the neighbor maximizing $\hat{p}_{\downarrow}$. This \emph{amortized lookahead strategy} guides local search toward long-term utility and accelerates convergence to near-optimal solutions.

\subsection{Neural Scoring Model}
\label{subsec:NN}

Given the dataset from Section~\ref{subsec:data}, we train a neural network surrogate to estimate the \textit{probability} that a neighbor $\mathbf{x}' \in \mathcal{N}(\mathbf{x})$ moves a state $\mathbf{x}$ closer to the reference assignment $\hat{\mathbf{x}}$. Figure~\ref{fig:architecture} provides an overview of the architecture: embeddings of the current assignment and candidate flips are contextualized through attention, then processed by a feed-forward encoder to produce neighbor-level scores. 

Each neighbor differs from the current assignment by exactly one variable flip.  
We therefore encode inputs as:  
(i) the current state $\mathbf{x}$, represented as a set of variable–value pairs, and  
(ii) for each neighbor $\mathbf{x}'$, the single query variable–value pair defining its difference from $\mathbf{x}$.

Taking inspiration from recent work~\citep{malhotra2025learning}, we propose an attention-based neural network architecture in which the variable–value pairs are embedded into learnable $d$-dimensional vectors~\citep{Mikolov2013EfficientEO}. To model dependencies between the current state and a candidate move, we employ an attention-based architecture~\citep{Attention}: embeddings of the current state act as \emph{keys} and \emph{values}, while the candidate move embeddings serve as the \emph{queries}. This design contextualizes each potential flip with respect to the full assignment and evidence, allowing the network to capture dependencies between observed evidence and promising variable flips.

Each contextualized neighbor embedding is concatenated with its raw embedding and passed through an encoder of fully connected layers with ReLU activations, residual connections~\citep{Residual}, and dropout regularization~\citep{srivastava_waypreventneural_2014}. The encoder produces logits, one per neighbor, which a multi-label classification head converts into probabilities indicating whether the flip reduces the Hamming distance.

\noindent\textbf{Multi-Label Classification Loss.}
Recall that for every search state $\mathbf{x}\in\mathbf{S}$ and neighbor $\mathbf{x}'\in\mathcal{N}(\mathbf{x})$, the label $y(\mathbf{x},\mathbf{x}')$ equals $1$ if the move decreases the Hamming distance to the reference assignment and $0$ otherwise.

Each neighbor $\mathbf{x}'$ is created by flipping a single query variable 
$Q_i\in\mathbf{Q}$ to a new value $q\in D_{Q_i}\setminus\{x_{Q_i}\}$.  
The classifier outputs $\hat{p}_{\downarrow}(\mathbf{x},\mathbf{x}')\in[0,1]$, 
interpreted as the probability that this flip decreases the Hamming distance.

We train the model with a binary cross-entropy loss applied independently to all 
neighbors:
\begin{align}
\mathcal{L}
&=
-\!\!\sum_{\mathbf{x}\in\mathbf{S}}
  \sum_{\mathbf{x}'\in\mathcal{N}(\mathbf{x})}
  y(\mathbf{x},\mathbf{x}') \log \hat{p}_{\downarrow}(\mathbf{x},\mathbf{x}') \notag\\
&\quad
-\!\!\sum_{\mathbf{x}\in\mathbf{S}}
  \sum_{\mathbf{x}'\in\mathcal{N}(\mathbf{x})}
  \bigl(1-y(\mathbf{x},\mathbf{x}')\bigr)
  \log \bigl(1-\hat{p}_{\downarrow}(\mathbf{x},\mathbf{x}')\bigr)
\end{align}



This formulation provides dense supervision for every 1-flip neighbor, 
and trains the network to score moves by their likelihood of reducing 
the Hamming distance to the reference solution.

\eat{
\subsection{Inference-Time Search Guidance}\label{sec:infer}

At inference time, the goal is to steer local search using both
(i) \emph{short-term objective improvement} and
(ii) \emph{long-term progress toward a near-optimal assignment}.
Given a current state $\mathbf{x}$ and its 1-flip neighborhood $\mathcal{N}(\mathbf{x})$,  
the neural network receives (a) the current assignment encoded as variable–value pairs
and (b) for each neighbor $\mathbf{x}'$, the variable–value flip defining that move.
It outputs a score $\mathcal{S}_{\mathrm{NN}}(\mathbf{x}') \in [0,1]$ estimating the
likelihood that transitioning to $\mathbf{x}'$ reduces the Hamming distance to the
reference solution.

\sva{Should we add here why not just use the NN; why do we need to get a signal from the LL Gain?}

To balance local improvement with this learned lookahead signal, we combine the neural score with the log-likelihood gain (best-improvement score) using a convex combination. 
Because the log-likelihood improvement $\mathcal{S}_{\mathrm{LL}}(\mathbf{x}')$ is measured in unbounded real values while the neural score $\mathcal{S}_{\mathrm{NN}}(\mathbf{x}')$ lies in $[0,1]$, we first normalize $\mathcal{S}_{\mathrm{LL}}$ into a probability-like quantity.  
Concretely, at each search step we rescale the log-likelihood gains of all neighbors into the unit interval, for example by applying a softmax transformation:

\[
\tilde{\mathcal{S}}_{\mathrm{LL}}(\mathbf{x}') \;=\; 
\frac{\exp(\mathcal{S}_{\mathrm{LL}}(\mathbf{x}'))}{\sum_{\mathbf{z}\in\mathcal{N}(\mathbf{x})}\exp(\mathcal{S}_{\mathrm{LL}}(\mathbf{z}))}.
\]

This ensures that $\tilde{\mathcal{S}}_{\mathrm{LL}}(\mathbf{x}') \in [0,1]$ and is directly comparable to $\mathcal{S}_{\mathrm{NN}}(\mathbf{x}')$.  
The final combined score is then defined as

\[
\mathcal{S}_{\mathrm{final}}(\mathbf{x}') 
= (1-\lambda)\,\tilde{\mathcal{S}}_{\mathrm{LL}}(\mathbf{x}') 
+ \lambda\,\mathcal{S}_{\mathrm{NN}}(\mathbf{x}'),
\]

where $\lambda \in [0,1]$ balances short-term objective improvement and long-term guidance.  
Setting $\lambda=0$ recovers standard greedy search, while $\lambda=1$ relies entirely on the learned surrogate.  
We tune $\lambda$ on a validation set to optimize performance.





At each search step, we choose the move maximizing the combined score:

\[
\mathbf{x}'
=
\arg\max_{\mathbf{z}\in\mathcal{N}(\mathbf{x})}
\mathcal{S}_{\mathrm{final}}(\mathbf{z}).
\]

This \emph{neural lookahead} strategy augments the classic best-improvement rule with
a learned prediction of long-term progress, helping the search escape poor local optima
and converge to high-quality solutions more efficiently.
}

\subsection{Inference-Time Search Guidance}
\label{sec:infer}

Our training framework is based on Hamming distance. An oracle neighbor selector always chooses a move that reduces the distance to the optimal assignment $\mathbf{x}^\ast$, providing a clean supervisory signal in which each neighbor can be labeled as \emph{good} (reduces $d_H$) or \emph{bad} (does not). Using solved or near-optimal MPE instances, the neural network is trained to approximate this oracle. At inference time, however, the true solution $\mathbf{x}^\ast$ is unknown, and predictions derived from approximate supervision are inherently noisy. When used in isolation, the learned guidance can therefore misdirect the search, particularly far from high-quality regions of the state space.

In contrast, the log-likelihood gain $
\mathcal{S}_{\mathrm{LL}}(\mathbf{x}' \mid \mathbf{x})$
is exact and always available at test time. It measures the immediate objective improvement of moving from $\mathbf{x}$ to a neighbor $\mathbf{x}'$. While reliable, this signal is purely greedy and frequently leads to entrapment in poor local optima. For notational convenience, we write $\mathcal{S}_{\mathrm{LL}}(\mathbf{x}')$ when the current state is clear from context.

Neither signal is sufficient on its own: neural guidance offers long-term lookahead but is imperfect, while log-likelihood gain is accurate but myopic. We therefore combine the two into a unified decision rule that balances short-term improvement with long-term promise. The neural score $\mathcal{S}_{\mathrm{NN}}(\mathbf{x}') \in [0,1]$ predicts the probability that a move decreases the Hamming distance to a high-quality solution, while the log-likelihood gain anchors the search in objective-improving directions. Since $\mathcal{S}_{\mathrm{LL}}(\mathbf{x}')$ is unbounded and $\mathcal{S}_{\mathrm{NN}}(\mathbf{x}')$ lies in $[0,1]$, we normalize $\mathcal{S}_{\mathrm{LL}}$ across the neighborhood using min--max scaling:
\[
\tilde{\mathcal{S}}_{\mathrm{LL}}(\mathbf{x}') \;=\;
\frac{\mathcal{S}_{\mathrm{LL}}(\mathbf{x}') - \min_{\mathbf{z}\in\mathcal{N}(\mathbf{x})}\mathcal{S}_{\mathrm{LL}}(\mathbf{z})}
     {\max_{\mathbf{z}\in\mathcal{N}(\mathbf{x})}\mathcal{S}_{\mathrm{LL}}(\mathbf{z}) - \min_{\mathbf{z}\in\mathcal{N}(\mathbf{x})}\mathcal{S}_{\mathrm{LL}}(\mathbf{z})}.
\]

The final selection score is given by:
\[
\mathcal{S}_{\mathrm{final}}(\mathbf{x}') \;=\;
(1-\lambda)\,\tilde{\mathcal{S}}_{\mathrm{LL}}(\mathbf{x}') +
\lambda\,\mathcal{S}_{\mathrm{NN}}(\mathbf{x}'),
\]
where $\lambda \in [0,1]$ balances short-term objective improvement with learned predictive guidance.
Setting $\lambda=0$ recovers standard greedy search, while $\lambda=1$ relies entirely on the neural surrogate.
In practice, $\lambda$ is tuned on a validation set.
At each search step, the neighbor $\mathbf{z}$ with the highest combined score $\mathcal{S}_{\mathrm{final}}(\mathbf{z})$ is chosen.

This \emph{neural lookahead} strategy augments classical best-improvement with a learned signal intended to capture longer-horizon utility.
The convex weighting provides a simple and interpretable mechanism to interpolate between the reliability of log-likelihood and the foresight of the neural predictor.
While a convex combination does not provide a universal dominance guarantee over its endpoints, selecting $\lambda$ on a validation set can improve robustness across instances and yields consistent gains in our experiments.

\section{Experiments}\label{sec:expts}

We evaluate the proposed method, \textsc{BEACON}\textemdash \underline{Be}yond-greedy \underline{E}stimation with \underline{A}ttention for \underline{C}onvergence to \underline{O}ptimal \underline{N}eighborhoods\textemdash which trains a neural network (Figure~\ref{fig:architecture}) on data generated by the procedure in Section~\ref{subsec:data} and performs inference using the method described in Section~\ref{sec:infer}. We assess \textsc{BEACON} by integrating it into two widely used local search strategies, replacing their standard neighbor scoring and selection rules with our learned scoring function.

\paragraph{Local Search Strategies.}
The first baseline is a conventional best-improvement local search that intensifies the search by repeatedly selecting the highest-scoring neighbor and restarting when trapped in a local optimum (\textsc{Greedy}). The second is the \textsc{GLS+} algorithm~\citep{gls+}, which augments best-improvement search with guided local search penalties. These methods are established references for evaluating neighbor-selection strategies.

\paragraph{BEACON Integration.}
We apply the inference procedure described in Section~\ref{sec:infer} to guide neighbor selection using neural network predictions, enabling the search to prioritize promising regions of the solution space. We denote the resulting variants as \textsc{BEACON-Greedy} and \textsc{BEACON-GLS+}. All experiments start from uniformly random initial assignments.\footnote{Although mini-bucket initialization is known to benefit local search for MPE~\citep{gls+}, we use random initialization to study the effect of neighbor selection in isolation.}

\subsection{Experimental Setup}


\paragraph{Amortized Inference Regime.}
Unlike standard competition protocols that evaluate a single MPE query per network, we evaluate the amortized performance of neural solvers on a fixed graphical model. The proposed neural network supports amortized inference by accumulating and reusing structural knowledge across multiple MPE queries on the same PGM, which is a central motivation for our evaluation setting. Accordingly, we construct a dataset of 1{,}000 distinct MPE instances per graphical model, split into 800 training, 100 validation, and 100 test instances. This protocol evaluates the ability of the learned model to generalize to unseen evidence patterns on a fixed graph and aligns with recent learning-based approaches to inference in PGMs~\citep{pmlr-v180-agarwal22a, arya_2024_solveconstraineda, arya2024_nn_mpe_nips, pmlr-v258-arya25a, malhotra2025learning}.

\paragraph{Benchmarks and Implementation Details.}
We evaluate on a diverse set of 25 high-treewidth PGMs from the UAI inference competitions~\citep{elidan_2010_2010, UAICompetition2022}, covering domains such as medical diagnosis (Promedas), genetic linkage analysis (Pedigree), Ising Models (Grids) and weighted constraint satisfaction problems (WCSP). The networks contain between 351 and 6{,}400 variables and up to 100{,}710 factors. For each network, we generate MPE instances via Gibbs sampling~\citep{koller-pgm} by designating 5\% to 20\% of variables as evidence, corresponding to a query ratio $qr \in [0.8, 0.95]$. We collect supervised training trajectories using the procedure in algorithm~\ref{algo:data}, with an anytime solver budget of $B=300$s and a local search step limit of $stl=2500$. Based on preliminary experiments, we use GLS+ with restarts at regular intervals for anytime MPE solving, as it achieves the best average performance across the benchmark suite.

The neural network uses 256-dimensional node embeddings, two multi-head attention layers, and ten skip-connection blocks. Each dense layer contains 512 units with ReLU activations and a dropout rate of 0.1~\citep{srivastava_waypreventneural_2014}. We train the model using Adam~\citep{kingma_stochasticoptimization_2017} with a learning rate of $2\times10^{-4}$, an exponential decay rate of $0.99$, and a batch size of 256. We apply early stopping after five epochs without validation improvement, with a maximum of 50 epochs. We tune the hyperparameter $\lambda$ over the candidate set $\{0.2, 0.5, 0.7, 1.0\}$ and select the value that performs best on the validation set. All models are implemented in PyTorch~\citep{paszke2019pytorch} and executed on a single NVIDIA A40 GPU. Additional design choices, implementation details, and hyperparameter settings are provided in Appendix~\ref{app:data} and \ref{app:hyperparameters}.

\subsection{Empirical Evaluation}
We evaluate all methods on a held-out test set of 100 queries for each graphical model. Each algorithm runs for a maximum of 4{,}000 search steps per query. All local search procedures aim to maximize the log-likelihood of the assignment, \( \ln p_\mathcal{M}(\mathbf{e}, \mathbf{q}) \), as the search progresses. We compare methods by recording the log-likelihood achieved at fixed step intervals and report two metrics: (i) \emph{win percentage} and (ii) \emph{average percentage improvement in log-likelihood} relative to the baseline.

\paragraph{Win Percentage.}
For each query and step budget, we compare the log-likelihood attained by the \textsc{BEACON}-augmented method against its corresponding baseline. We score a win, tie, and loss as 1, 0.5, and 0, respectively, and report the win percentage as the average score across the 100 test queries. Figure~\ref{fig:baseline-wins} visualizes these results using heatmaps, where green indicates improvement over the baseline, red indicates degradation, and darker shades represent larger differences.


\begin{figure}[t!]
    \centering
    \includegraphics[width=\linewidth]{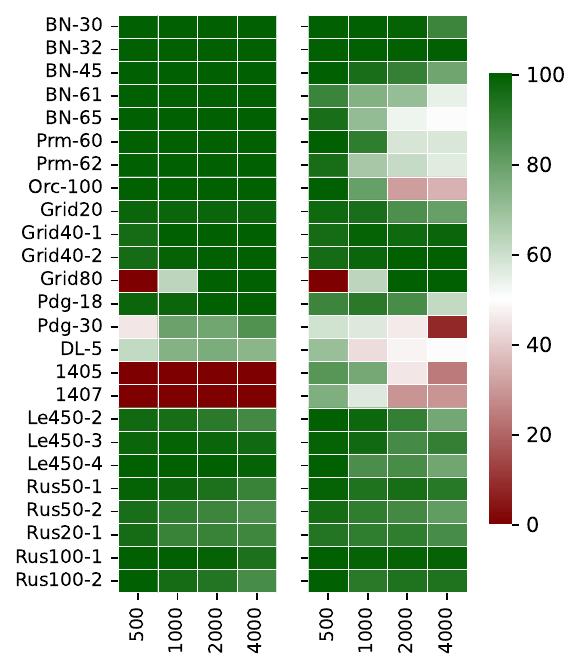}
    \caption{Win percentage heatmaps comparing our methods to their respective baselines: (left) \textsc{BEACON-Greedy} vs.\ \textsc{Greedy} and (right) \textsc{BEACON-GLS+} vs.\ \textsc{GLS+}. Columns show datasets; rows show search steps. Green indicates that the \textsc{BEACON}-enhanced method achieves higher log-likelihood than its baseline in majority of the cases ($>$ 50 points), red indicates lower performance, and darker shades represent larger differences.}
    \label{fig:baseline-wins}
\end{figure}


Across most graphical models, augmenting local search with \textsc{BEACON} consistently improves performance, as reflected by the predominantly green regions in both heatmaps. The gains are strongest early in the search, where neural-guided neighbor selection helps escape poor initial assignments and accelerates convergence. Although baseline methods narrow the gap with more steps, \textsc{BEACON}-enhanced variants typically retain an advantage even at 4{,}000 steps, indicating sustained effectiveness of the learned guidance.

For \textsc{BEACON-Greedy} (left panel of Figure~\ref{fig:baseline-wins}), the improvements are often substantial, with dark green regions dominating across most models with only minor regressions on some WCSP instances (dark red). \textsc{BEACON-GLS+} (right panel) delivers more uniform gains by overcoming the lack of amortization in standard GLS+, which must relearn penalties for each query. By transferring learned structural knowledge across queries, \textsc{BEACON} reaches high-quality solutions earlier much faster which is most pronounced on large WCSP and RUS benchmarks where dark green regions persist. Overall, these results demonstrate that \textsc{BEACON} both enhances greedy local search by escaping local optima and highlights the benefits of amortized, transferable guidance across queries.

\begin{table}[t]
\caption{Percentage improvement of \textsc{BEACON-Greedy} over \textsc{Greedy} at different step budgets.}
\centering
\setlength{\tabcolsep}{3pt}
\renewcommand{\arraystretch}{0.95}
\resizebox{0.8\linewidth}{!}{
\scriptsize
\begin{tabular}{l*{4}{S[table-format=+2.2]}}
\toprule
\multicolumn{1}{c}{Steps} & \multicolumn{1}{c}{500} & \multicolumn{1}{c}{1000} & \multicolumn{1}{c}{2000} & \multicolumn{1}{c}{4000} \\
\midrule
BN-30 & 52.47 & 53.62 & 54.98 & 55.41 \\
BN-32 & 69.64 & 65.51 & 66.90 & 67.61 \\
BN-45 & 62.50 & 67.15 & 69.10 & 70.43 \\
BN-61 & 57.98 & 64.96 & 68.35 & 70.80 \\
BN-65 & 75.45 & 76.58 & 77.06 & 77.41 \\
Prm-60 & 77.93 & 79.91 & 83.69 & 84.71 \\
Prm-62 & 81.45 & 84.13 & 85.72 & 86.09 \\
Orc-100 & 41.71 & 41.30 & 41.94 & 41.91 \\
Grid20 & 11.45 & 10.07 & 9.17 & 8.48 \\
Grid40-1 & 3.57 & 5.81 & 6.10 & 6.09 \\
Grid40-2 & 4.05 & 7.36 & 7.86 & 7.68 \\
Grid80 & -1.02 & 0.01 & 5.60 & 6.06 \\
Pdg-18 & 16.50 & 16.86 & 17.70 & 16.40 \\
Pdg-30 & -0.88 & 4.03 & 3.70 & 3.80 \\
DL-5 & 1.30 & 9.26 & 9.47 & 7.74 \\
1405 & -1.81 & -1.92 & -1.97 & -1.92 \\
1407 & -0.39 & -0.40 & -0.39 & -0.38 \\
Le450-2 & 1.17 & 0.60 & 0.39 & 0.25 \\
Le450-3 & 7.25 & 5.02 & 3.89 & 3.15 \\
Le450-4 & 25.19 & 20.58 & 16.90 & 13.98 \\
Rus20-1 & 7.24 & 3.30 & 2.06 & 1.43 \\
Rus50-1 & 7.60 & 4.16 & 2.83 & 1.67 \\
Rus50-2 & 6.46 & 3.54 & 2.30 & 1.55 \\
Rus100-1 & 6.83 & 3.30 & 1.93 & 0.93 \\
Rus100-2 & 6.35 & 2.88 & 1.56 & 0.84 \\
\bottomrule
\end{tabular}
}
\label{tab:pct-improvement-greedy}
\end{table}

\begin{table}[t]
\caption{Percentage improvement of \textsc{BEACON-GLS+} over GLS+ at different step budgets.}
\centering
\setlength{\tabcolsep}{3pt}
\renewcommand{\arraystretch}{0.95}
\resizebox{0.8\linewidth}{!}{
\scriptsize
\begin{tabular}{l*{4}{S[table-format=+2.2]}}
\toprule
\multicolumn{1}{c}{Steps} & \multicolumn{1}{c}{500} & \multicolumn{1}{c}{1000} & \multicolumn{1}{c}{2000} & \multicolumn{1}{c}{4000} \\
\midrule
BN-30 & 49.42 & 36.56 & 19.29 & 8.21 \\
BN-32 & 46.23 & 30.28 & 13.36 & 5.52 \\
BN-45 & 42.62 & 29.19 & 17.55 & 8.55 \\
BN-61 & 26.74 & 15.11 & 7.26 & 1.46 \\
BN-65 & 20.19 & 4.78 & -0.36 & -0.52 \\
Prm-60 & 82.00 & 33.07 & -2.06 & -1.22 \\
Prm-62 & 45.55 & -0.42 & 0.35 & 1.58 \\
Orc-100 & 35.99 & 8.48 & -1.55 & -0.90 \\
Grid20 & 5.20 & 2.21 & 0.93 & 0.40 \\
Grid40-1 & 3.28 & 4.38 & 3.07 & 2.14 \\
Grid40-2 & 3.69 & 5.78 & 4.00 & 2.42 \\
Grid80 & -1.02 & 0.01 & 5.24 & 3.75 \\
Pdg-18 & 9.70 & 4.79 & 2.53 & 0.52 \\
Pdg-30 & 0.92 & 0.48 & -0.38 & -2.04 \\
DL-5 & 4.97 & -5.14 & -3.19 & -2.52 \\
1405 & 0.94 & 0.38 & 0.00 & -0.12 \\
1407 & 0.09 & 0.03 & -0.03 & -0.01 \\
Le450-2 & 0.74 & 0.65 & 0.32 & 0.11 \\
Le450-3 & 4.00 & 2.82 & 0.85 & 0.74 \\
Le450-4 & 15.09 & 5.08 & 2.02 & 1.04 \\
Rus20-1 & 6.67 & 4.23 & 2.39 & 1.67 \\
Rus50-1 & 7.48 & 4.36 & 2.80 & 1.90 \\
Rus50-2 & 6.43 & 3.89 & 2.51 & 1.78 \\
Rus100-1 & 6.82 & 2.00 & 1.41 & 1.17 \\
Rus100-2 & 6.35 & 1.47 & 1.00 & 0.95 \\
\bottomrule
\end{tabular}
}
\label{tab:pct-improvement-gls}
\end{table}

\paragraph{Percentage Increase in the Log-Likelihood.}
Tables~\ref{tab:pct-improvement-greedy} and~\ref{tab:pct-improvement-gls} report the percentage improvement in log-likelihood achieved by \textsc{BEACON-Greedy} and \textsc{BEACON-GLS+} relative to their respective baselines. We compute this metric as \(\frac{1}{N}\sum_{i=1}^{N}\frac{\mathcal{LL}_{D}^{(i)}-\mathcal{LL}_{S}^{(i)}}{|\mathcal{LL}_{S}^{(i)}|}\times100\), where \(\mathcal{LL}_{S}^{(i)}\) and \(\mathcal{LL}_{D}^{(i)}\) denote the baseline and \textsc{BEACON}-enhanced log-likelihood for instance \(i\), respectively.





\eat{Tables~\ref{tab:pct-improvement-greedy} and~\ref{tab:pct-improvement-gls} report the percentage improvement in log-likelihood of \textsc{BEACON-Greedy} over \textsc{Greedy} and \textsc{BEACON-GLS+} over \textsc{GLS+}, respectively.}

Each row in both tables corresponds to a graphical model, identified in the first column, followed by four entries reporting results for step budgets of 500, 1000, 2000, and 4000. Positive values indicate that the \textsc{BEACON}-augmented method attains higher log-likelihood than its baseline, while negative values indicate degradation. Larger magnitudes correspond to greater relative changes. Because log-likelihood scales vary across models due to differences in partition functions, values are not comparable across rows.

Table~\ref{tab:pct-improvement-greedy} shows that \textsc{BEACON-Greedy} achieves substantial improvements over the \textsc{Greedy} baseline, typically remaining positive as search progresses. This confirms that neural guidance effectively mitigates premature convergence to poor local optima, yielding faster progress and higher-quality solutions. Table~\ref{tab:pct-improvement-gls} demonstrates moderate but consistent gains for \textsc{BEACON-GLS+}, particularly in early search stages. By leveraging amortization to transfer offline-learned structural knowledge, \textsc{BEACON} accelerates convergence and improves final solution quality. This advantage remains robust even as the step budget increases, particularly on challenging WCSP and RUS benchmarks where \textsc{BEACON-GLS+} maintains a clear performance lead.

\paragraph{Summary.}
Experiments on 25 high-treewidth probabilistic graphical models show that augmenting local search with \textsc{BEACON} consistently improves performance. When integrated into \textsc{Greedy} and \textsc{GLS+}, \textsc{BEACON}-enhanced methods reach higher-quality solutions in significantly fewer steps, whereas baseline methods require substantially more iterations to attain comparable log-likelihoods. The advantage is most pronounced early in the search, where neural guidance enables rapid escape from poor initial regions and faster convergence to high-quality assignments. These results demonstrate that attention-based neighbor selection accelerates convergence and often yields superior solutions compared with conventional best-improvement local search.


We also provide a comparison of \textsc{BEACON} against state-of-the-art non–local-search solvers, including \textsc{DAOOPT}~\citep{marinescu2010daoopt} and \textsc{TOULBAR2}~\citep{givry:hal-04021879}, in Appendix~\ref{app:exact-solvers-comparison}.

\section{Conclusion}

We propose a neural-guided lookahead strategy for neighbor selection in local search for MPE inference in probabilistic graphical models. By augmenting best-improvement with an optimality-aware neural scoring function, the method guides search toward higher log-likelihood regions and reduces premature convergence to poor local optima. The learned guidance captures transferable structural information from (near-)optimal assignments, enabling more efficient test-time search. As a result, high-quality solutions are reached significantly faster than with methods such as GLS+, which must relearn heuristics for each query. Experiments on high-treewidth PGMs demonstrate consistent performance gains across diverse models, highlighting learned guidance as an effective drop-in enhancement to classical local search.

\noindent\textbf{Limitations and Future Work.}
Our approach entails several natural trade-offs. As with most learned components, performance can be sensitive to distributional shifts and the choice of the hyperparameter $\lambda$, although we find the method to be stable across a wide range of settings in practice. Training relies on supervision from approximate solvers, which may introduce some bias but enables scalability to problem instances where exact MPE solutions are unavailable. The current one-step Hamming-based guidance focuses on local improvements and may miss longer-horizon gains. Additionally, scoring all neighbors with a neural model can add nontrivial computational overhead in models with large variable domains. Finally, unlike exact inference methods, convergence guarantees depend on predictor accuracy. These considerations point to several promising directions for future work, including richer and less biased supervision sources, multi-step lookahead architectures, adaptive weighting mechanisms, and extensions to broader classes of discrete optimization problems.


\eat{
Our work has several limitations. First, the predictor is trained on instances where high-quality solutions are available, so performance depends on how closely the training distribution matches the test distribution. Significant shifts in graph structure, evidence patterns, or query ratios may reduce accuracy. Second, the supervision relies on approximate anytime solvers, which can introduce bias when the reference solutions are far from optimal. Third, the method requires tuning of the trade-off parameter $\lambda$; performance may degrade if either local or global signals are overemphasized.  

In addition, the predictor evaluates neighbors only through one-step Hamming distance reductions. Although this captures useful short-term guidance, it may overlook multi-step sequences that lead to stronger improvements. Finally, scoring all neighbors with a neural model can add nontrivial computational overhead in domains with large variable domains. Unlike exact inference methods, our convergence guarantees also depend on assumptions about predictor accuracy and do not hold unconditionally.  

Future work includes exploring supervision sources richer than approximate solvers, developing architectures that capture multi-step lookahead, designing adaptive mechanisms for balancing neural and likelihood signals, and extending the approach to broader classes of discrete optimization problems.
}

\section*{Acknowledgements}
This work was supported in part by the DARPA CODORD program under contract number HR00112590089, the DARPA Assured Neuro Symbolic Learning and Reasoning (ANSR) Program under contract number HR001122S0039, the National Science Foundation grant IIS-1652835, and the AFOSR award FA9550-23-1-0239.

\bibliographystyle{plain}
\bibliography{references}

\newpage
\appendix
\onecolumn
\section{Proofs of the Theoretical Results}
\label{app:proofs}

\subsection{Proof of Theorem~\ref{thm:drift}: (Convergence)}
\begin{proof}
Let $\{x_t\}_{t \ge 0}$ denote the sequence of assignments produced by
one–flip local search, and let $h_t = d_H(x_t, x^\ast)$ be the Hamming
distance to the optimal assignment $x^\ast$.  
At each non–optimal state, by assumption, the executed move
reduces the distance by one with probability $\alpha$ and increases it
by one with probability $(1 - \alpha)$.  Thus the conditional
expectation satisfies
\[
\mathbb{E}[h_{t+1} \mid h_t]
  = h_t - (2\alpha - 1).
\tag{3}
\]

Taking total expectation on both sides and unrolling this recurrence
yields
\[
\mathbb{E}[h_t]
  = \mathbb{E}[h_{t-1}] - (2\alpha - 1)
  = h_0 - t(2\alpha - 1),
\tag{4}
\]
valid as long as the process has not yet been absorbed at $h_t = 0$.
Hence the expected distance decreases linearly with $t$
at rate $(2\alpha - 1)$ per step.

Now define the stopping time
\[
\tau^\ast := \inf\{t \ge 0 : h_t = 0\},
\]
which represents the first time the process reaches the optimum.
Because $h_t$ is non–negative and decreases in expectation by a fixed
amount at each step (Equation~3), the process has a strictly positive
drift toward zero.  Intuitively, each step removes on average
$(2\alpha - 1)$ units of distance, and the total distance to remove is
$h_0$.  Taking expectations and applying the law of total expectation
gives
\[
\mathbb{E}[h_{\tau^\ast}]
  = \mathbb{E}\!\left[h_0 - (2\alpha - 1)\tau^\ast\right]
  = 0,
\]
since $h_{\tau^\ast}=0$ at absorption.  Rearranging yields
\[
(2\alpha - 1)\,\mathbb{E}[\tau^\ast]
   = h_0.
\]
Because the decrease per step is an expectation (not deterministic),
this identity gives an upper bound on the true expected hitting time:
\[
\mathbb{E}[\tau^\ast] \le \frac{h_0}{2\alpha - 1}.
\tag{5}
\]

\paragraph{High--probability behavior.}
Since $h_t$ changes by at most one per step and exhibits a positive drift of $(2\alpha-1)$ toward zero when $\alpha>\tfrac12$,
standard concentration results for drift processes with bounded increments (e.g., Azuma--Hoeffding or additive drift tail bounds)
imply that the hitting time $\tau^\dagger$ concentrates around its expectation and has an exponentially decaying tail above $\mathbb{E}[\tau^\dagger]$.
For clarity, we focus on the expected hitting-time guarantee in Theorem~\ref{thm:drift} and omit an explicit tail expression.


\end{proof}

\subsection{Data Generation for Neighbor Scoring}
\label{app:data}

Our goal in this section is to describe a concrete pipeline for constructing supervised training data for
\emph{neighbor scoring}. The final learning problem is the following: given a current assignment
$\mathbf{x}$ (consistent with evidence) and a 1-flip neighbor $\mathbf{x}'\in\mathcal{N}(\mathbf{x})$,
we want a model to score $\mathbf{x}'$ so that, within the neighborhood $\mathcal{N}(\mathbf{x})$,
moves that are \emph{distance-reducing} (with respect to a high-quality target assignment) are ranked
above moves that are not.

\paragraph{MPE queries and notation.}
Let $\mathcal{M}$ be a probabilistic graphical model over variables $\mathbf{X}$ with factors
$\mathbf{F}$ and evidence $\mathbf{e}$.
For a query set $\mathbf{Q}\subset \mathbf{X}$, let $\mathbf{E}=\mathbf{X}\setminus \mathbf{Q}$
denote the evidence variables. Evidence variables are fixed to $\mathbf{e}$, and local search only
modifies the query variables.
We write $\mathbf{x}$ for a complete assignment to $\mathbf{X}$ that is consistent with evidence
(i.e., $\mathbf{x}_{\mathbf{E}}=\mathbf{e}$), and we write $\mathbf{x}_{\mathbf{Q}}$ for its restriction
to the query variables.

To measure progress toward a target assignment on the query variables, we use Hamming distance
restricted to $\mathbf{Q}$:
\[
d_H(\mathbf{x},\mathbf{y})
\;=\;
\left|\left\{Q_i\in\mathbf{Q}\;:\; x_{Q_i}\neq y_{Q_i}\right\}\right|.
\]
Because $\mathbf{E}$ is fixed, any neighbor move changes exactly one query variable and therefore
changes $d_H(\cdot,\cdot)$ by exactly $1$.

\paragraph{Reference assignments for supervision.}
In principle, one may define supervision with respect to the optimal MPE solution $\mathbf{x}^\ast$.
In practice, however, computing $\mathbf{x}^\ast$ is infeasible for large, high-treewidth models.
Instead, for each training query we compute a \emph{high-quality reference assignment}
$\hat{\mathbf{x}}$ using an anytime MPE solver with a fixed computational budget.
This reference solution provides a stable surrogate target that induces a well-defined notion of
distance-reducing moves.

Formally, for a state $\mathbf{x}$ and a neighbor $\mathbf{x}' \in \mathcal{N}(\mathbf{x})$, we define the
binary supervision signal
\[
y(\mathbf{x},\mathbf{x}')
\;=\;
\mathbb{I}\!\left[d_H(\mathbf{x}', \hat{\mathbf{x}}) < d_H(\mathbf{x}, \hat{\mathbf{x}})\right],
\]
where $\mathbb{I}[\cdot]$ denotes the indicator function.
Since $\mathbf{x}'$ differs from $\mathbf{x}$ by a single query-variable flip, this label simply indicates
whether the move advances one step toward the reference solution in Hamming distance.

\paragraph{Trajectory-based state collection (distribution matching).}
A central design choice concerns the selection of training states $\mathbf{S}$ on which neighborhoods are
labeled.
While one could sample assignments uniformly or from the model distribution $P_{\mathcal{M}}$, such states
do not reflect those encountered during inference.
Local search concentrates rapidly in structured regions of the state space (e.g., basins of attraction,
plateaus, and near-optimal ``funnels'') whose neighborhood structure differs markedly from that of random
assignments.

To reduce train--test mismatch, we therefore construct $\mathbf{S}$ by executing a local search procedure
under the same evidence $\mathbf{e}$ and collecting the intermediate states along its trajectory.
This trajectory-based sampling yields training data that more closely matches the inference-time
distribution.
Empirically, this choice is crucial for learning a scorer whose \emph{within-neighborhood ranking} remains
reliable in the regions of the search space that dominate practical inference.

\paragraph{Data collection procedure.}
Algorithm~\ref{algo:data} summarizes the complete pipeline. For each sample $i=1,\dots,N$:
\begin{enumerate}
    \item \textbf{Generate an MPE query.}
    We sample a complete assignment $\tilde{\mathbf{x}}\sim P_{\mathcal{M}}$ and select a random query set
    $\mathbf{Q}\subset\mathbf{X}$ of size $|\mathbf{Q}|=qr\cdot|\mathbf{X}|$.
    The remaining variables $\mathbf{E}=\mathbf{X}\setminus\mathbf{Q}$ form the evidence set, and we set
    evidence to $\mathbf{e}=\tilde{\mathbf{x}}_{\mathbf{E}}$.
    This procedure yields consistent evidence assignments and avoids pathological evidence choices.

    \item \textbf{Compute a high-quality reference solution.}
    We run an anytime MPE solver on $(\mathcal{M},\mathbf{e})$ for a fixed time budget $B$ and obtain a
    reference assignment $\hat{\mathbf{x}}$ (consistent with $\mathbf{e}$).
    The intent is not to guarantee optimality, but to obtain a strong and coherent target that provides
    stable labels.

    \item \textbf{Collect trajectory states.}
    We then run a 1-flip local search procedure under the same evidence $\mathbf{e}$ for a fixed step limit
    $stl$ and collect the visited states into a set (or multiset) $\mathbf{S}$.
    The local search used here can be the same template used at inference time (e.g., best-improvement or a
    mild stochastic variant), since the primary goal is to sample states from the distribution encountered
    during search.

    \item \textbf{Label neighbors at each collected state.}
    For each $\mathbf{x}\in\mathbf{S}$, we enumerate its 1-flip neighborhood $\mathcal{N}(\mathbf{x})$
    (flipping one query variable to an alternative value) and assign labels
    $y(\mathbf{x},\mathbf{x}')\in\{0,1\}$ using the Hamming-distance criterion above.
    The resulting record stores $(\mathbf{e},\mathbf{x},\{(\mathbf{x}',y(\mathbf{x},\mathbf{x}')) :
    \mathbf{x}'\in\mathcal{N}(\mathbf{x})\})$.
\end{enumerate}

\paragraph{Learning objective and inference-time usage.}
Algorithm~\ref{algo:data} produces a dataset of local neighborhoods annotated with binary labels indicating
whether a candidate move reduces Hamming distance to the reference assignment $\hat{\mathbf{x}}$.
Training on this data yields a learned scoring function that, given a state $\mathbf{x}$ and a neighbor
$\mathbf{x}' \in \mathcal{N}(\mathbf{x})$, outputs a value in $[0,1]$ interpreted as the likelihood that the
move is distance-reducing:
\[
\hat{p}_{\downarrow}(\mathbf{x},\mathbf{x}')
\;\approx\;
\Pr\!\left[d_H(\mathbf{x}',\hat{\mathbf{x}})
=
d_H(\mathbf{x},\hat{\mathbf{x}})-1
\;\middle|\;
\mathbf{x}, \mathbf{x}'\right].
\]

At inference time, we use this model as a \emph{ranking function} within each neighborhood:
\[
\mathbf{x}_{t+1}
\;=\;
\arg\max_{\mathbf{z}\in\mathcal{N}(\mathbf{x}_t)}
\hat{p}_{\downarrow}(\mathbf{x}_t,\mathbf{z}).
\]
Importantly, the learned scores are not thresholded and need not be globally calibrated.
Only the relative ordering of neighbors matters.
This design aligns directly with our drift-style analysis, in which convergence depends on the frequency
with which selected moves are truly distance-reducing along the search trajectory, not on global calibration of scores across unrelated neighborhoods.

\paragraph{Design choices and practical considerations.}
Algorithm~\ref{algo:data} leaves several degrees of freedom; we summarize the most important ones here
and explain the rationale for our defaults.

\begin{itemize}

    \item \textbf{Evidence generation.}
    Evidence is generated by first sampling a complete assignment
    $\tilde{\mathbf{x}} \sim P_{\mathcal{M}}$ and setting $\mathbf{e}=\tilde{\mathbf{x}}_{\mathbf{E}}$.
    This ensures internal consistency and avoids degenerate evidence patterns with extremely low probability,
    yielding MPE queries that better reflect typical model behavior. We use Gibbs sampling procedure~\citep{koller-pgm} 
    to sample the complete assignments.

    \item \textbf{Choice of query ratio $qr$.}
    The query ratio controls the difficulty of the induced MPE subproblem.
    Small $qr$ yields easier instances (few decision variables), while large $qr$ yields harder instances.
    We treat $qr$ as a knob to generate a spectrum of difficult queries by randomizing it over a range $[0.8, 0.95]$ 
    to improve robustness.

    \item \textbf{Teacher quality via solver budget $B$.}
    The role of the anytime solver is to provide a high-quality reference assignment $\hat{\mathbf{x}}$ that
    induces coherent distance labels. If $B$ is too small, $\hat{\mathbf{x}}$ may be noisy, and the
    ``distance-reducing'' relation becomes a weak training signal.
    Increasing $B$ typically improves label quality but increases offline data-generation cost.
    In practice we fix $B=300$s and treat it as part of the teacher strength: stronger teachers yield cleaner
    supervision.

    \item \textbf{Collecting training states via local search.}
    The set $\mathbf{S}$ is meant to approximate the distribution of states encountered at inference time.
    If we instead sampled states randomly (uniformly, or even from $P_{\mathcal{M}}$), the induced
    neighborhoods would be qualitatively different from those visited by search, creating a train--test
    mismatch. Running local search under the same evidence $\mathbf{e}$ produces trajectory states that capture the
    basins, plateaus, and near-optimal regions that matter in practice.

    \item \textbf{Choice of search procedure for data collection.}
    The data-collection search does not need to be identical to the final inference-time search, but it
    should generate a reasonable coverage of states the model will later see.
    A purely greedy search can over-concentrate $\mathbf{S}$ near local optima, while a mildly stochastic
    variant (e.g., occasional random flips or random restarts) can improve coverage.
    We therefore use a simple local search procedure with a fixed step limit $stl=500$ that chooses neighbors greedily 50\% of the time and guided by $\mathbf{\hat{x}}$ remaining time with regular restarts.

    \item \textbf{Step limit $stl$ versus time limit.}
    We implement the data-collection budget as a step limit rather than wall-clock time to make the
    distribution of collected states less sensitive to hardware and implementation details.
    A fixed $stl$ also yields datasets of comparable size across instances.
    A time limit can be used instead, but then the number of collected states varies substantially.


    \item \textbf{State deduplication.}
    One can store $\mathbf{S}$ as a set (deduplicated) or as a multiset (keeping repeats).
    Deduplication increases diversity per sample, while repeats implicitly upweight frequently visited
    states.
    Because inference-time trajectories often revisit plateau states, keeping repeats can be beneficial.
    We treat this as an implementation choice; our default is to keep repeats within a trajectory and
    rely on batching/shuffling across samples to avoid overfitting.

    \item \textbf{Distance-based supervision.}
    The Hamming criterion yields a simple, local, and dense supervision signal: every neighbor is labeled,
    and in the binary classification case each move is unambiguously ``toward'' or ``away'' from the reference on exactly
    one bit.
    This is preferable to sparse supervision based only on objective improvement, which is often noisy and
    can be myopic.
    The learned scorer is used as a \emph{within-neighborhood} ranking function, so the key requirement is
    that distance-reducing moves tend to be ranked above non-reducing moves often enough along the
    trajectory.
    A convenient record format is
    $\{\mathbf{e}, \mathbf{x}, \mathcal{D}_x\}$ where
    $\mathcal{D}_x=\{(\mathbf{x}',y(\mathbf{x},\mathbf{x}')):\mathbf{x}'\in\mathcal{N}(\mathbf{x})\}$.
    This supports neighborhood-wise training and aligns with the inference-time operation of ranking
    neighbors within $\mathcal{N}(\mathbf{x})$.
\end{itemize}

\section{Description of the Probabilistic Graphical Models} \label{dataset}

Table~\ref{tab:datasets} summarizes the networks employed in our experiments, detailing the number of variables and factors for each model. The experiments were performed on high-treewidth probabilistic graphical models (PGMs) of varying scales from the UAI inference competitions \citep{elidan_2010_2010, UAICompetition2022}, with variable counts ranging from 351 to 6{,}400 and up to 100{,}710 factors. We additionally report the induced treewidth of each model, computed using DAOOPT~\citep{marinescu2010daoopt}, which ranges from 20 to 315. The table also lists the abbreviated model names used throughout the paper for ease of reference.

\begin{table}[ht!]
    \centering
    \caption{Summary of the networks used in our experiments, showing the number of variables, number of factors and Induced Treewidth for each model.}
    \label{tab:datasets}
    \begin{tabular}{|c|c|c|c|c|}
    \hline
    Network Name  & Short Name  & Number of Factors & Number of Variables & Induced Treewidth \\ 
    \hline
    \textbf{BN\_30} & BN-30 & 1156 & 1156 & 54\\
    \textbf{BN\_32} & BN-32 & 1444 & 1444 & 61\\
    \textbf{BN\_45} & BN-45 & 880 & 880 & 24\\
    \textbf{BN\_61} & BN-61 & 667 & 667 & 44\\
    \textbf{BN\_65} & BN-65 & 440 & 440 & 65\\
    \textbf{Promedas\_60} & Prm-60 & 1076 & 1076 & 32 \\
    \textbf{Promedas\_62} & Prm-62 & 639 & 639 & 31\\
    \textbf{or\_chain\_100} & Orc-100 & 1125 & 1125 & 61 \\
    \textbf{pedigree18} & Pdg-18 & 1184 & 1184 & 20 \\
    \textbf{pedigree30} & Pdg-30 & 1289 & 1289 & 21\\
    \textbf{grid20x20.f5.wrap} & Grid20 & 1200 & 400 & 46\\
    \textbf{grid40x40.f15.wrap} & Grid40-1 & 4800 & 1600 & 96\\
    \textbf{grid40x40.f10} & Grid40-2 & 4720 & 1600 & 54 \\
    \textbf{grid80x80.f10} & Grid80 & 19040 & 6400 & 108\\
    \textbf{driverlog05ac.wcsp} & DL-5 & 30038 & 351 & 68 \\
    \textbf{1405.wcsp} & 1405 & 18258 & 855 & 93\\
    \textbf{1407.wcsp} & 1407 & 21786 & 1057 & 93\\
    \textbf{le450\_5a\_2.wcsp} & Le450-2 & 5714 & 450 & 315 \\
    \textbf{le450\_5a\_3.wcsp} & Le450-3 & 5714 & 450 & 315 \\
    \textbf{le450\_5a\_4.wcsp} & Le450-4 & 5714 & 450 & 315 \\
    \textbf{rus\_20\_40\_9\_3} & Rus20-1 & 16950 & 854 & 30\\
    \textbf{rus\_50\_100\_4\_1} & Rus50-1 & 45360 & 944 & 60\\
    \textbf{rus\_50\_100\_6\_1} & Rus50-2 & 45360 & 944 & 60\\
    \textbf{rus\_100\_200\_1\_1} & Rus100-1 & 100710 & 1094 & 110\\
    \textbf{rus\_100\_200\_3\_3} & Rus100-2 & 100710 & 1094 & 110 \\
    \hline
    \end{tabular}
\end{table}

\section{Hyperparameter Details} \label{app:hyperparameters}


We used a consistent set of hyperparameters across all networks in our experiments. From the initial dataset of 1{,}000 samples, 900 were allocated for training and 100 for testing. The training set was further divided into 800 samples for training and 100 for validation. Note that we generate the set of training states $\mathbf{S}$ by running a local search algorithm that explores the assignment space using the training examples, resulting in a substantially larger effective training set for the neural network.

The neural networks in \textsc{BEACON} employ 256-dimensional embeddings~\citep{Mikolov2013EfficientEO}, two multi-head attention layers~\citep{Attention}, and ten skip-connection blocks~\citep{Residual}. Each dense layer comprises 512 units with a dropout~\citep{srivastava_waypreventneural_2014} rate of 0.1. Training was conducted using the Adam optimizer \citep{kingma_stochasticoptimization_2017} with a learning rate of $2 \times 10^{-4}$ and an exponential decay factor of $0.99$. Models were trained for up to 50 epochs with a batch size of 256, applying early stopping if the validation performance failed to improve for five consecutive epochs. After training, each model was evaluated on 100 MPE queries with a query ratio of $qr \in [0.8, 0.95]$, defined as the fraction of variables included in the query set.

Hyperparameters were selected using the validation dataset. The weighting parameter $\lambda$, which balances short-term gain and long-term performance, was tuned over $\{0.2, 0.5, 0.7, 1.0\}$ to maximize performance.

\section{Average Time Per Search Step}\label{sec:timeperstep}

\begin{table}[ht!]
    \centering
    \caption{Average time per step (in seconds) for local search algorithms. \textsc{BEACON} introduces additional cost due to neural scoring of neighbors but maintains low computational overhead across most networks.}
    \begin{tabular}{|c|c|c|c|c|}
    \hline
    Network Name & \textsc{Greedy} & \textsc{BEACON-Greedy} & \textsc{GLS+} & \textsc{BEACON-GLS+}\\
    \hline
    BN-30 & 0.0042 ± 0.0001 & 0.0071 ± 0.0001 & 0.0040 ± 0.0001 & 0.0073 ± 0.0000 \\
    BN-32 & 0.0030 ± 0.0001 & 0.0081 ± 0.0003 & 0.0031 ± 0.0001 & 0.0083 ± 0.0001 \\
    BN-45 & 0.0028 ± 0.0001 & 0.0062 ± 0.0003 & 0.0028 ± 0.0000 & 0.0063 ± 0.0001 \\
    BN-61 & 0.0039 ± 0.0001 & 0.0054 ± 0.0002 & 0.0027 ± 0.0001 & 0.0057 ± 0.0000 \\
    BN-65 & 0.0038 ± 0.0001 & 0.0049 ± 0.0002 & 0.0036 ± 0.0001 & 0.0055 ± 0.0000 \\
    Prm-60 & 0.0028 ± 0.0000 & 0.0069 ± 0.0001 & 0.0029 ± 0.0000 & 0.0071 ± 0.0001 \\
    Prm-62 & 0.0026 ± 0.0000 & 0.0053 ± 0.0001 & 0.0027 ± 0.0001 & 0.0056 ± 0.0003 \\
    Orc-100 & 0.0028 ± 0.0001 & 0.0073 ± 0.0004 & 0.0028 ± 0.0001 & 0.0073 ± 0.0002 \\
    Pdg-18 & 0.0063 ± 0.0001 & 0.0108 ± 0.0001 & 0.0067 ± 0.0001 & 0.0116 ± 0.0004 \\
    Pdg-30 & 0.0064 ± 0.0002 & 0.0113 ± 0.0001 & 0.0067 ± 0.0001 & 0.0119 ± 0.0001 \\
    Grid20 & 0.0024 ± 0.0001 & 0.0047 ± 0.0000 & 0.0027 ± 0.0000 & 0.0051 ± 0.0002 \\
    Grid40-1 & 0.0032 ± 0.0001 & 0.0088 ± 0.0003 & 0.0038 ± 0.0000 & 0.0092 ± 0.0001 \\
    Grid40-2 & 0.0032 ± 0.0000 & 0.0088 ± 0.0001 & 0.0037 ± 0.0000 & 0.0091 ± 0.0001 \\
    Grid80 & 0.0133 ± 0.0001 & 0.0364 ± 0.0008 & 0.0129 ± 0.0001 & 0.0365 ± 0.0010 \\
    DL-5 & 0.0135 ± 0.0007 & 0.0170 ± 0.0010 & 0.0161 ± 0.0001 & 0.0200 ± 0.0010 \\
    1405 & 0.0060 ± 0.0003 & 0.0106 ± 0.0004 & 0.0071 ± 0.0003 & 0.0120 ± 0.0006 \\
    1407 & 0.0066 ± 0.0001 & 0.0125 ± 0.0005 & 0.0079 ± 0.0004 & 0.0138 ± 0.0006 \\
    Le450-2 & 0.0025 ± 0.0000 & 0.0050 ± 0.0000 & 0.0029 ± 0.0000 & 0.0056 ± 0.0002 \\
    Le450-3 & 0.0037 ± 0.0001 & 0.0066 ± 0.0000 & 0.0040 ± 0.0000 & 0.0071 ± 0.0001 \\
    Le450-4 & 0.0046 ± 0.0000 & 0.0080 ± 0.0003 & 0.0053 ± 0.0000 & 0.0089 ± 0.0003 \\
    Rus20-1 & 0.0035 ± 0.0001 & 0.0068 ± 0.0000 & 0.0048 ± 0.0001 & 0.0078 ± 0.0000 \\
    Rus50-1 & 0.0064 ± 0.0001 & 0.0104 ± 0.0002 & 0.0090 ± 0.0001 & 0.0121 ± 0.0003 \\
    Rus50-2 & 0.0063 ± 0.0001 & 0.0103 ± 0.0001 & 0.0088 ± 0.0001 & 0.0121 ± 0.0003 \\
    Rus100-1 & 0.0155 ± 0.0001 & 0.0204 ± 0.0001 & 0.0188 ± 0.0001 & 0.0227 ± 0.0000 \\
    Rus100-2 & 0.0155 ± 0.0001 & 0.0204 ± 0.0000 & 0.0188 ± 0.0002 & 0.0227 ± 0.0002 \\
    \bottomrule
    \end{tabular}    
    \label{tab:time-per-step}
\end{table}

Table~\ref{tab:time-per-step} reports the average time per step for each local search algorithm. On average, all algorithms operate within a few milliseconds per step, enabling real-time inference. Although the baseline methods execute slightly faster than their counterparts augmented with \textsc{BEACON}, the additional computational cost remains minimal. The slowdown arises because, at each step, the neural network must compute scores for all neighbors \(\mathcal{N}(\mathbf{x})\) of the current assignment \(\mathbf{x}\), in addition to evaluating the log-likelihood gain. Even in the most computationally demanding cases, \textsc{BEACON} requires at most \(0.0365\,\mathrm{s}\) per step, indicating that its computational cost remains negligible for practical use.

\section{Comparison with Branch \& Bound based Solvers in Anytime Setting}\label{app:exact-solvers-comparison}

\begin{table}[ht!]
    \centering
    \caption{Average percentage gap (relative to \textsc{Greedy}) comparing local search methods and exact solvers under a 60 second limit. Exact solvers dominate on smaller networks, while \textsc{BEACON}-augmented methods yield better solutions on larger ones. Entries marked \texttt{-inf} indicate that no solution was found within the time limit. \textsc{Greedy} serves as the baseline (all values are zero by definition).}
    \begin{tabular}{|c|c|c|c|c|c|c|}
    \toprule
    \multicolumn{1}{|p{2cm}|}{\centering Network Name} & \textsc{Greedy} & \multicolumn{1}{|p{2cm}|}{\centering \textsc{BEACON-Greedy}} & \textsc{GLS+} & \multicolumn{1}{|p{2cm}|}{\centering \textsc{BEACON-GLS+}} & \textsc{DAOOPT} & \textsc{TOULBAR2}\\
    \midrule
    BN-30 & 0.00 ± 0.00 & 56.37 ± 4.26 & 80.69 ± 4.99 & 80.55 ± 4.92 & \textbf{81.25 ± 3.77} & \textbf{81.25 ± 3.77} \\
    BN-32 & 0.00 ± 0.00 & 67.20 ± 1.46 & 67.71 ± 0.76 & 69.47 ± 0.83 & \textbf{69.53 ± 0.70} & \textbf{69.53 ± 0.70} \\
    BN-45 & 0.00 ± 0.00 & 68.25 ± 4.61 & 82.42 ± 4.46 & 82.36 ± 4.33 & \textbf{83.43 ± 4.53} & \textbf{83.43 ± 4.53} \\
    BN-61 & 0.00 ± 0.00 & 65.62 ± 13.90 & 85.66 ± 4.84 & 85.14 ± 5.24 & \textbf{85.95 ± 4.85} & \textbf{85.95 ± 4.85}\\
    BN-65 & 0.00 ± 0.00 & 73.64 ± 4.30 & 80.17 ± 4.49 & 80.01 ± 4.52 & 79.28 ± 4.54 & \textbf{80.19 ± 4.50}\\
    Prm-60 & 0.00 ± 0.00 & 89.17 ± 5.81 & 92.42 ± 3.76 & 92.15 ± 4.02 & \textbf{92.53 ± 3.76} & \textbf{92.53 ± 3.76} \\
    Prm-62 & 0.00 ± 0.00 & 86.95 ± 6.70 & 90.64 ± 5.31 & 90.64 ± 5.52 & \textbf{90.85 ± 5.22} & \textbf{90.85 ± 5.22}\\
    Orc-100 & 0.00 ± 0.00 & 40.76 ± 10.75 & 60.87 ± 11.27 & 60.20 ± 11.12 & \textbf{61.27 ± 11.26} & \textbf{61.27 ± 11.29}\\
    Pdg-18 & 0.00 ± 0.00 & 4.54 ± 3.36 & 28.88 ± 2.15 & 27.09 ± 2.29 & \textbf{30.41 ± 1.90} & \textbf{30.41 ± 1.90}\\
    Pdg-30 & 0.00 ± 0.00 & 2.99 ± 4.79 & 32.37 ± 2.49 & 29.76 ± 2.71 & \textbf{34.66 ± 2.08} & \textbf{34.66 ± 2.08}\\
    Grid20 & 0.00 ± 0.00 & 7.70 ± 2.17 & 7.93 ± 1.56 & 8.16 ± 1.70 & \textbf{8.74 ± 1.83} & 8.54 ± 1.67\\
    Grid40-1 & 0.00 ± 0.00 & 6.65 ± 1.15 & 7.08 ± 1.26 & 8.70 ± 1.22 & \textbf{12.26 ± 1.39} & 10.98 ± 1.05\\
    Grid40-2 & 0.00 ± 0.00 & 8.10 ± 1.07 & 8.45 ± 1.02 & 9.78 ± 1.07 & \textbf{12.48 ± 1.39} & 11.15 ± 0.93 \\
    Grid80 & 0.00 ± 0.00 & -12.26 ± 3.58 & 2.61 ± 0.61 & -12.74 ± 3.95 & -inf & \textbf{39.24 ± 15.37}\\
    DL-5 & 0.00 ± 0.00 & 3.83 ± 17.08 & \textbf{43.40 ± 15.86} & 38.77 ± 13.49 & -inf & 4.34 ± 29.92 \\
    1405 & 0.00 ± 0.00 & -2.19 ± 1.56 & \textbf{2.52 ± 1.98} & 2.39 ± 1.81 & -inf & 1.14 ± 0.91\\
    1407 & 0.00 ± 0.00 & -0.40 ± 0.23 & \textbf{0.33 ± 0.27} & 0.27 ± 0.20 & -inf & 0.23 ± 0.16\\
    Le450-2 & 0.00 ± 0.00 & \textbf{0.19 ± 0.16} & 0.16 ± 0.15 & 0.18 ± 0.17 & -inf & -9.53 ± 2.31\\
    Le450-3 & 0.00 ± 0.00 & 3.18 ± 2.32 & 2.65 ± 2.00 & \textbf{3.25 ± 2.29} & -inf & -9.53 ± 2.31\\
    Le450-4 & 0.00 ± 0.00 & 14.98 ± 8.48 & 14.45 ± 7.95 & \textbf{15.30 ± 8.49} & -inf & -7.46 ± 5.79\\
    Rus20-1 & 0.00 ± 0.00 & 1.44 ± 2.17 & -0.10 ± 3.30 & 1.10 ± 2.32 & 0.61 ± 2.56 &  \textbf{2.20 ± 2.37} \\
    Rus50-1 & 0.00 ± 0.00 & \textbf{2.50 ± 2.91} & -0.02 ± 2.63 & 2.30 ± 2.96 & -5.97 ± 4.06 & -4.65 ± 3.13 \\
    Rus50-2 & 0.00 ± 0.00 & 1.42 ± 2.01 & -0.27 ± 3.02 & \textbf{1.62 ± 2.29} & -6.54 ± 5.02 & -4.75 ± 3.08 \\
    Rus100-1 & 0.00 ± 0.00 & \textbf{2.44 ± 2.17} & 0.80 ± 2.24 & \textbf{2.44 ± 2.17} & -5.64 ± 3.54 & -16.04 ± 5.85 \\
    Rus100-2 & 0.00 ± 0.00 & \textbf{2.12 ± 1.87} & -0.33 ± 0.42 & 2.09 ± 1.87 & -5.74 ± 3.17 & -17.24 ± 5.90\\
    \bottomrule
    \end{tabular}    
    \label{tab:exact-solvers}
\end{table}

Table~\ref{tab:exact-solvers} compares local search methods, including \textsc{BEACON} and baseline methods, against branch-and-bound–based exact solvers. We evaluate two widely used solvers: And/OR Branch and Bound (AOBB)~\citep{AOBB}, using the state-of-the-art \textsc{DAOOPT} implementation~\citep{marinescu2010daoopt}, and \textsc{Toulbar2}~\citep{givry:hal-04021879}. All methods are evaluated on 100 test MPE queries under a uniform time limit of 60 seconds per query. The values reported in each row are computed relative to the \textsc{Greedy} baseline and represent the average percentage gap between the baseline and solver solutions as \(\frac{1}{N}\sum_{i=1}^{N}\frac{\mathcal{LL}_{D}^{(i)}-\mathcal{LL}_{S}^{(i)}}{|\mathcal{LL}_{S}^{(i)}|}\times100\), where \(\mathcal{LL}_{S}^{(i)}\) and \(\mathcal{LL}_{D}^{(i)}\) denote the \textsc{Greedy} and solver log-likelihoods for instance \(i\) after 60 seconds. Positive values indicate improvements over \textsc{Greedy}, while negative values indicate worse performance. Because log-likelihood scales vary across models due to differences in partition functions and potential ranges, rows should not be compared directly.

On smaller networks, including Bayesian Networks (BN), Promedas (Prm), and Pedigree (Pdg), exact solvers achieve superior performance, as expected. In these settings, both GLS-based methods perform competitively, with \textsc{BEACON-GLS+} occasionally outperforming \textsc{GLS+} by a modest margin. In contrast, on larger and more challenging models such as WCSP and RUS, exact solvers degrade substantially relative to local search approaches. In these regimes, \textsc{BEACON}-augmented methods consistently achieve the strongest performance, substantially improving over all baselines. Notably, on the largest instances, \textsc{DAOOPT} fails to return any solution within the time limit, as indicated by \texttt{-inf} in the corresponding rows.

\section{Effect of Weighting Parameter}

The convex combination score used to select neighbors in all our experiments is controlled by a weighting parameter \(\lambda\). The final score is computed as

\[
\mathcal{S}_{\mathrm{final}}(\mathbf{x}') 
= (1-\lambda)\,\tilde{\mathcal{S}}_{\mathrm{LL}}(\mathbf{x}') 
+ \lambda\,\mathcal{S}_{\mathrm{NN}}(\mathbf{x}')
\]

where \(\tilde{\mathcal{S}}_{\mathrm{LL}}(\mathbf{x}')\) denotes the normalized log-likelihood improvement of neighbor \(\mathbf{x}'\) over the current assignment \(\mathbf{x}\), and \(\mathcal{S}_{\mathrm{NN}}(\mathbf{x}')\) represents the neural network’s score for the same neighbor.

\begin{figure}[ht!]
    \centering
    \includegraphics[width=1.0\linewidth]{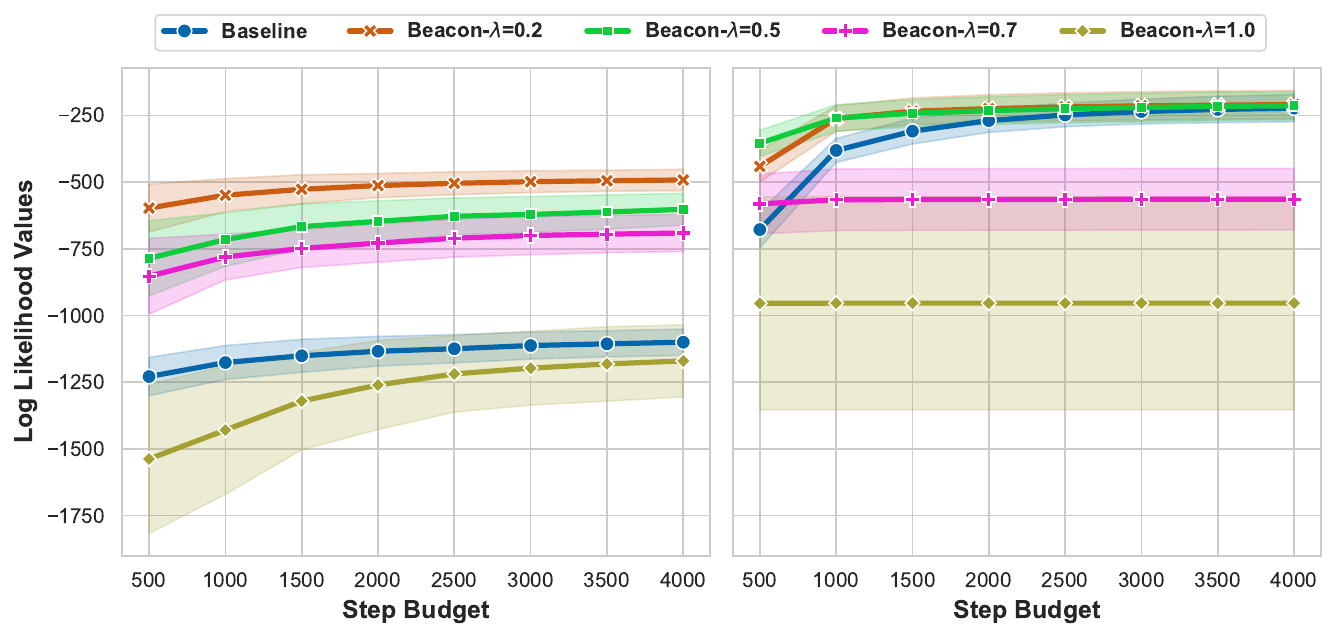}
    \caption{Average log-likelihood scores across 100 MPE queries for different \(\lambda\) values on the BN-30 model. The x-axis shows the step budget, and the y-axis shows the average log-likelihood with standard deviation. The left subfigure corresponds to \textsc{BEACON-Greedy}, and the right subfigure to \textsc{BEACON-GLS+}.}
    \label{fig:lambda-bn30}
\end{figure}

\begin{figure}[ht!]
    \centering
    \includegraphics[width=1.0\linewidth]{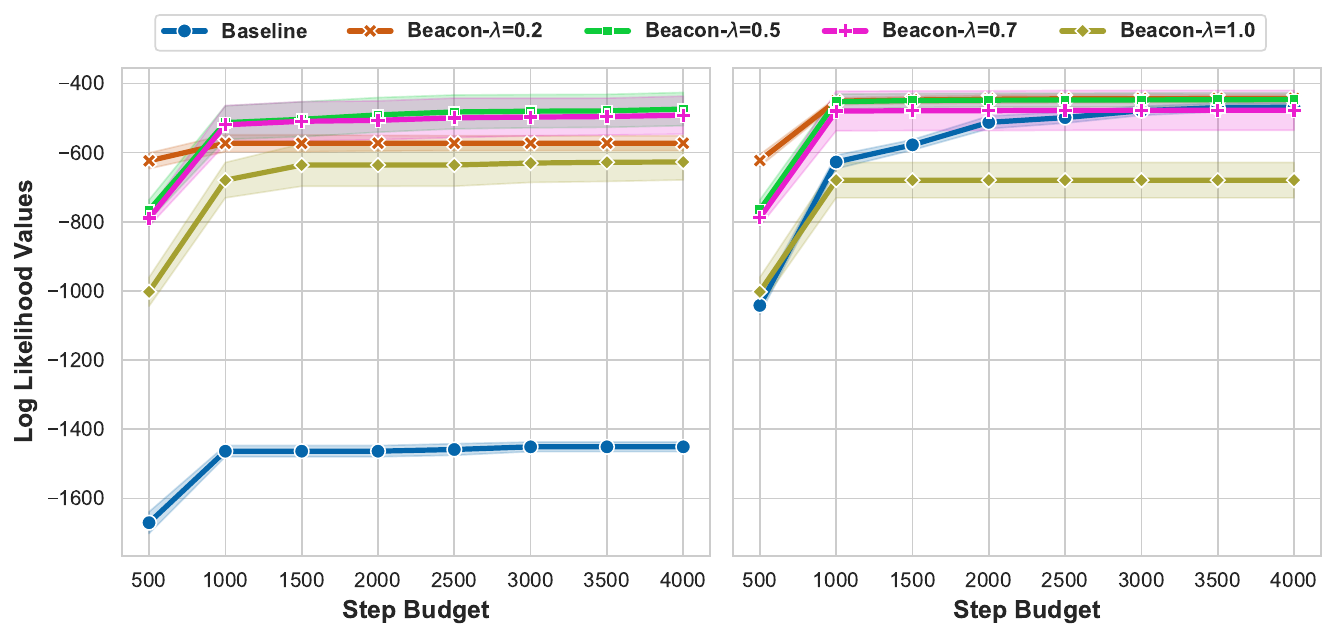}
    \caption{Average log-likelihood scores across 100 MPE queries for different \(\lambda\) values on the BN-32 model. The x-axis shows the step budget, and the y-axis shows the average log-likelihood with standard deviation. The left subfigure corresponds to \textsc{BEACON-Greedy}, and the right subfigure to \textsc{BEACON-GLS+}.}
    \label{fig:lambda-bn32}
\end{figure}

\begin{figure}[ht!]
    \centering
    \includegraphics[width=1.0\linewidth]{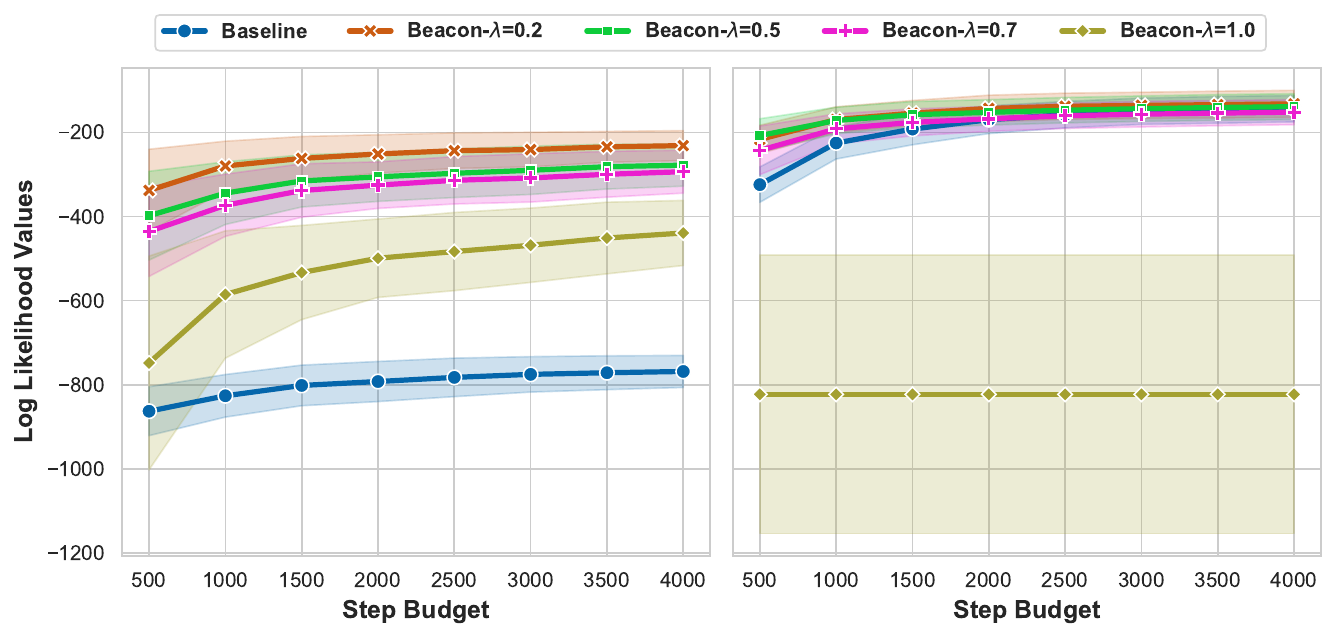}
    \caption{Average log-likelihood scores across 100 MPE queries for different \(\lambda\) values on the BN-45 model. The x-axis shows the step budget, and the y-axis shows the average log-likelihood with standard deviation. The left subfigure corresponds to \textsc{BEACON-Greedy}, and the right subfigure to \textsc{BEACON-GLS+}.}
    \label{fig:lambda-bn45}
\end{figure}

\begin{figure}[ht!]
    \centering
    \includegraphics[width=1.0\linewidth]{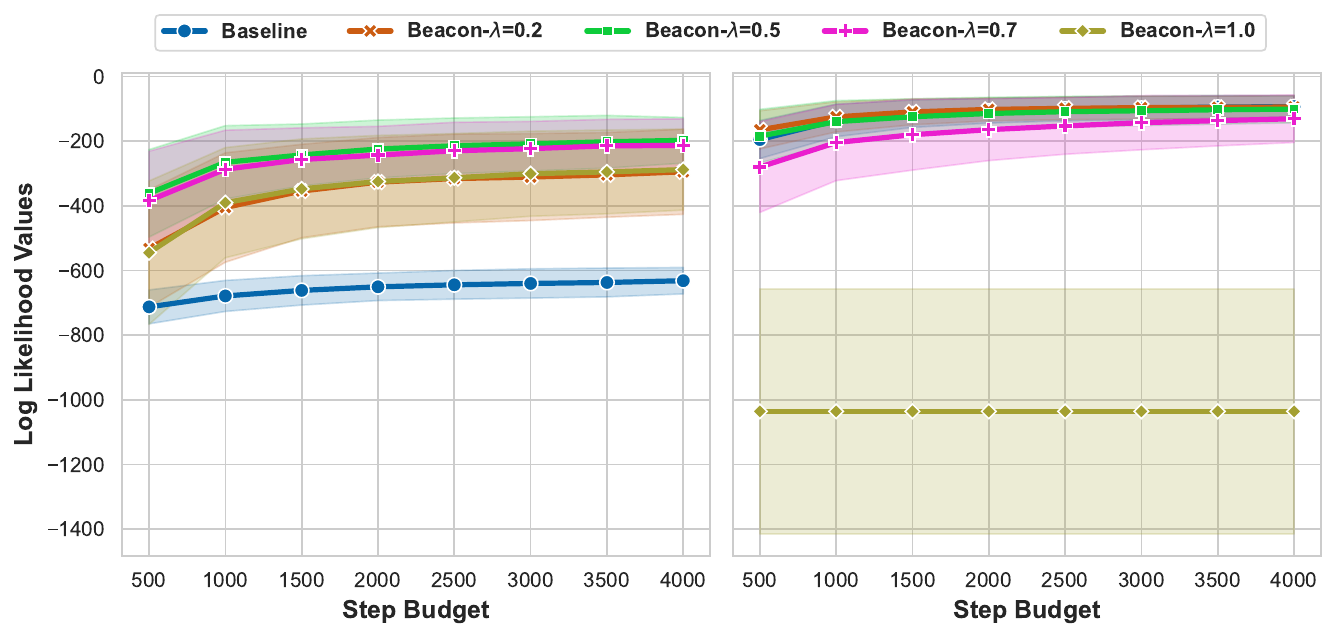}
    \caption{Average log-likelihood scores across 100 MPE queries for different \(\lambda\) values on the BN-61 model. The x-axis shows the step budget, and the y-axis shows the average log-likelihood with standard deviation. The left subfigure corresponds to \textsc{BEACON-Greedy}, and the right subfigure to \textsc{BEACON-GLS+}.}
    \label{fig:lambda-bn61}
\end{figure}

\begin{figure}[ht!]
    \centering
    \includegraphics[width=1.0\linewidth]{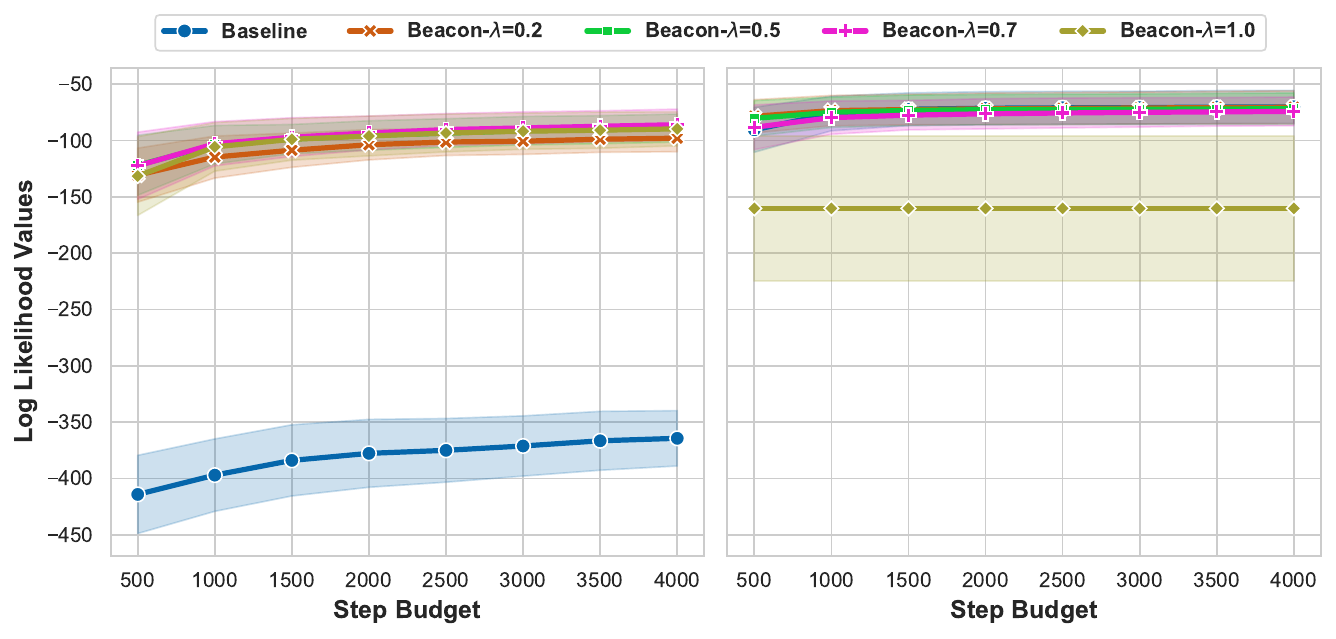}
    \caption{Average log-likelihood scores across 100 MPE queries for different \(\lambda\) values on the BN-65 model. The x-axis shows the step budget, and the y-axis shows the average log-likelihood with standard deviation. The left subfigure corresponds to \textsc{BEACON-Greedy}, and the right subfigure to \textsc{BEACON-GLS+}.}
    \label{fig:lambda-bn65}
\end{figure}

\begin{figure}[ht!]
    \centering
    \includegraphics[width=1.0\linewidth]{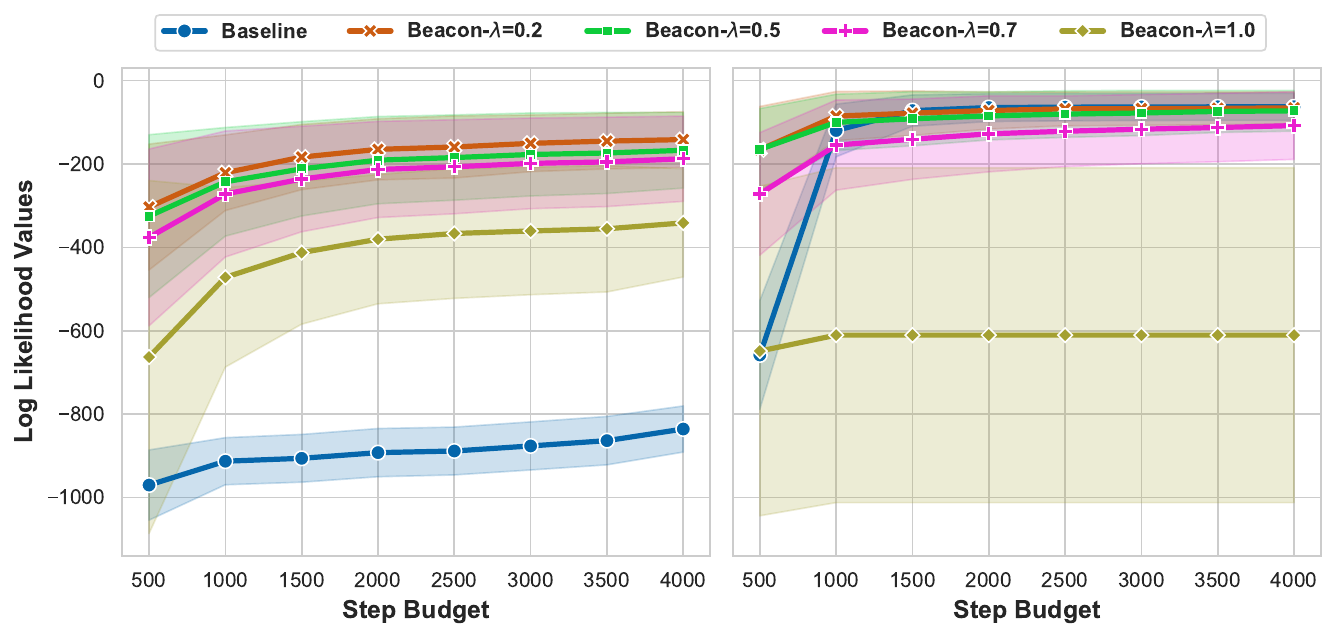}
    \caption{Average log-likelihood scores across 100 MPE queries for different \(\lambda\) values on the Prm60 model. The x-axis shows the step budget, and the y-axis shows the average log-likelihood with standard deviation. The left subfigure corresponds to \textsc{BEACON-Greedy}, and the right subfigure to \textsc{BEACON-GLS+}.}
    \label{fig:lambda-prm60}
\end{figure}

\begin{figure}[ht!]
    \centering
    \includegraphics[width=1.0\linewidth]{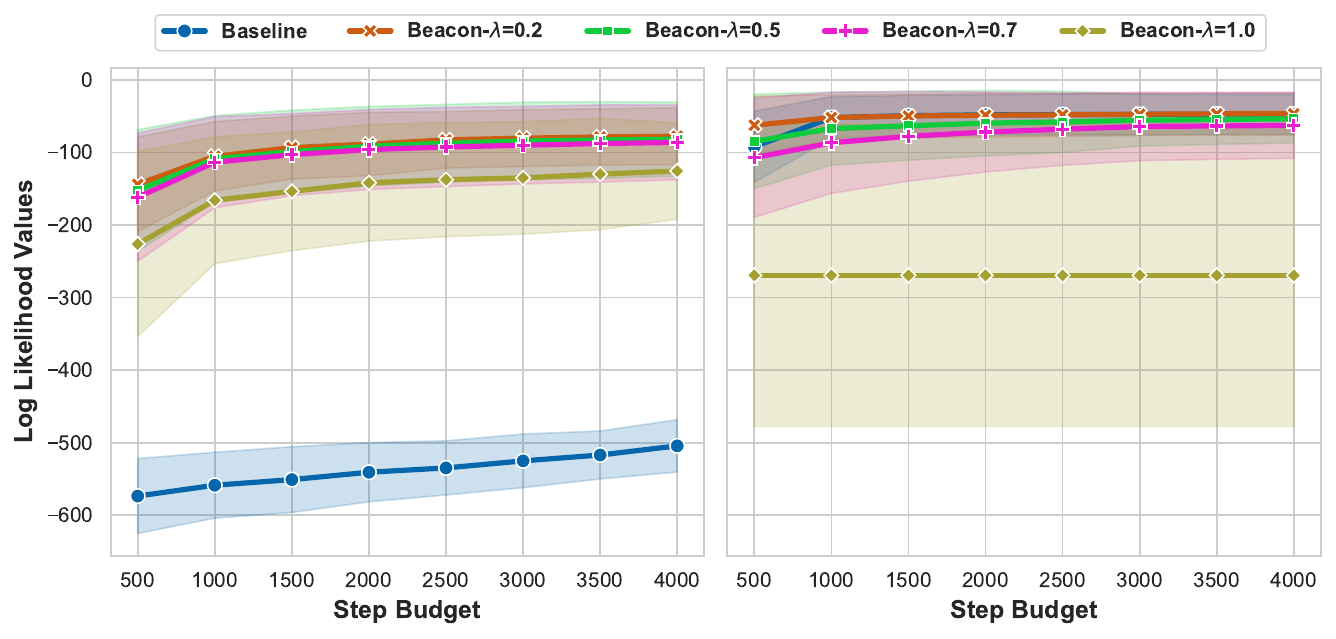}
    \caption{Average log-likelihood scores across 100 MPE queries for different \(\lambda\) values on the Prm62 model. The x-axis shows the step budget, and the y-axis shows the average log-likelihood with standard deviation. The left subfigure corresponds to \textsc{BEACON-Greedy}, and the right subfigure to \textsc{BEACON-GLS+}.}
    \label{fig:lambda-prm62}
\end{figure}

\begin{figure}[ht!]
    \centering
    \includegraphics[width=1.0\linewidth]{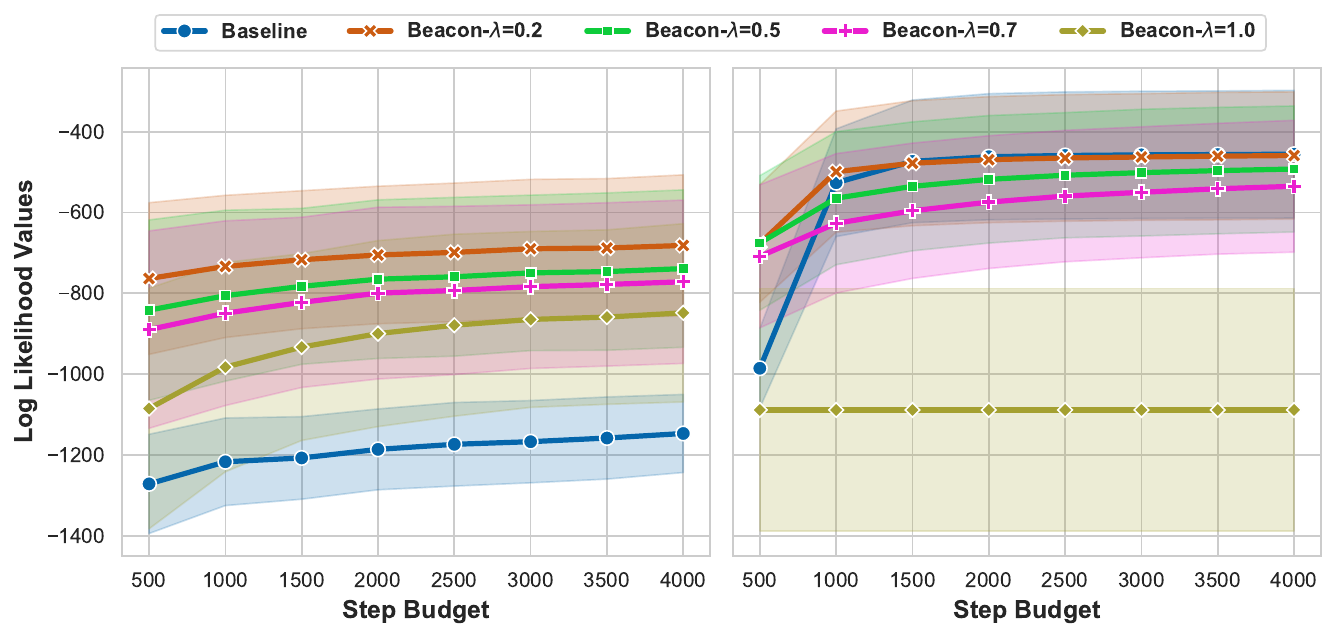}
    \caption{Average log-likelihood scores across 100 MPE queries for different \(\lambda\) values on the Orc-100 model. The x-axis shows the step budget, and the y-axis shows the average log-likelihood with standard deviation. The left subfigure corresponds to \textsc{BEACON-Greedy}, and the right subfigure to \textsc{BEACON-GLS+}.}
    \label{fig:lambda-orc100}
\end{figure}

\begin{figure}[ht!]
    \centering
    \includegraphics[width=1.0\linewidth]{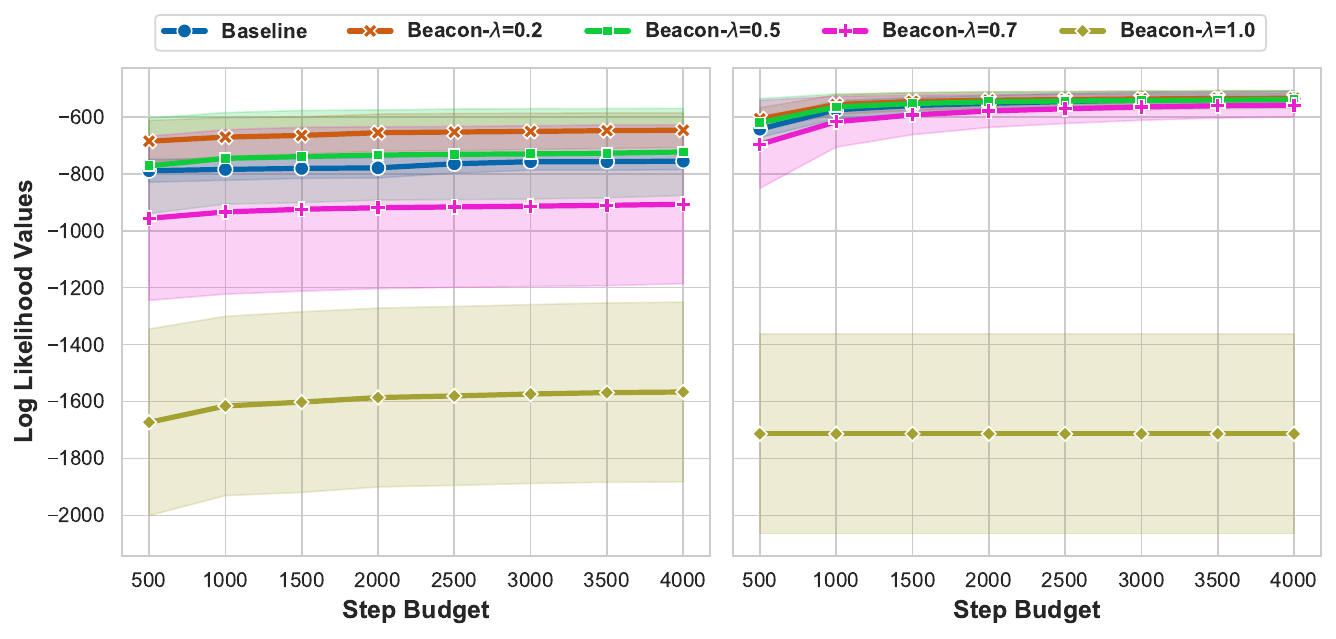}
    \caption{Average log-likelihood scores across 100 MPE queries for different \(\lambda\) values on the pedigree18 model. The x-axis shows the step budget, and the y-axis shows the average log-likelihood with standard deviation. The left subfigure corresponds to \textsc{BEACON-Greedy}, and the right subfigure to \textsc{BEACON-GLS+}.}
    \label{fig:lambda-pedigree18}
\end{figure}

\begin{figure}[ht!]
    \centering
    \includegraphics[width=1.0\linewidth]{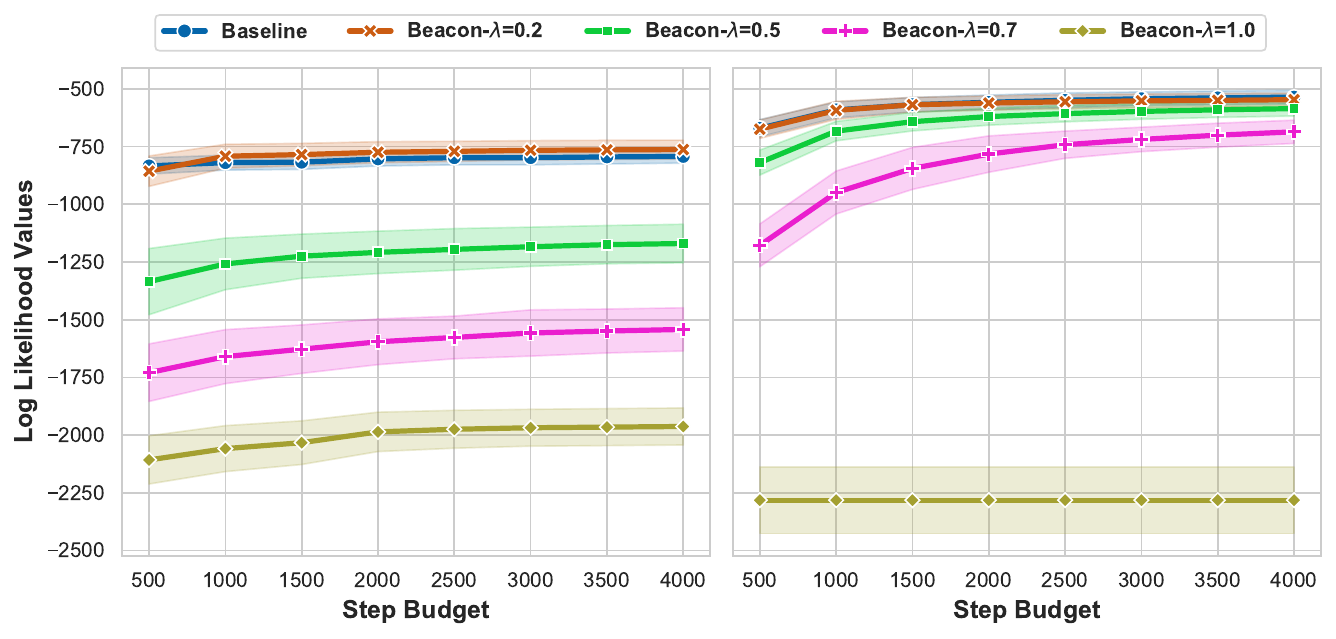}
    \caption{Average log-likelihood scores across 100 MPE queries for different \(\lambda\) values on the pedigree30 model. The x-axis shows the step budget, and the y-axis shows the average log-likelihood with standard deviation. The left subfigure corresponds to \textsc{BEACON-Greedy}, and the right subfigure to \textsc{BEACON-GLS+}.}
    \label{fig:lambda-pedigree30}
\end{figure}

\begin{figure}[ht!]
    \centering
    \includegraphics[width=1.0\linewidth]{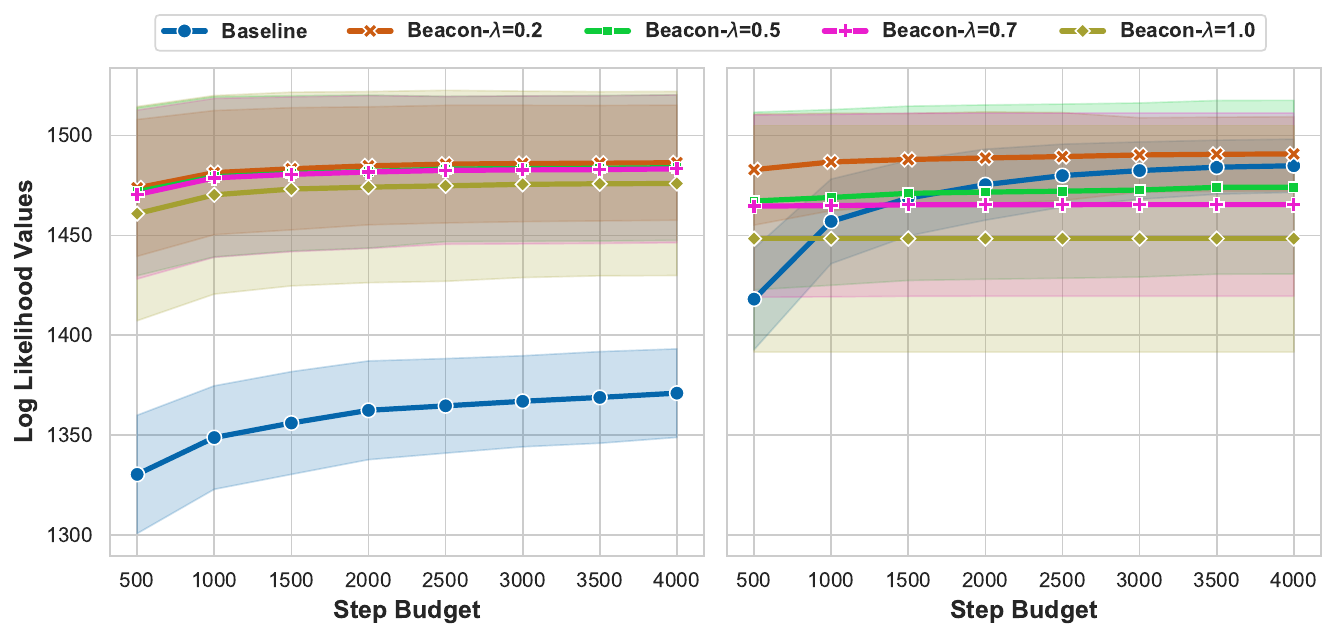}
    \caption{Average log-likelihood scores across 100 MPE queries for different \(\lambda\) values on the Grid20 model. The x-axis shows the step budget, and the y-axis shows the average log-likelihood with standard deviation. The left subfigure corresponds to \textsc{BEACON-Greedy}, and the right subfigure to \textsc{BEACON-GLS+}.}
    \label{fig:lambda-grid20}
\end{figure}

\begin{figure}[ht!]
    \centering
    \includegraphics[width=1.0\linewidth]{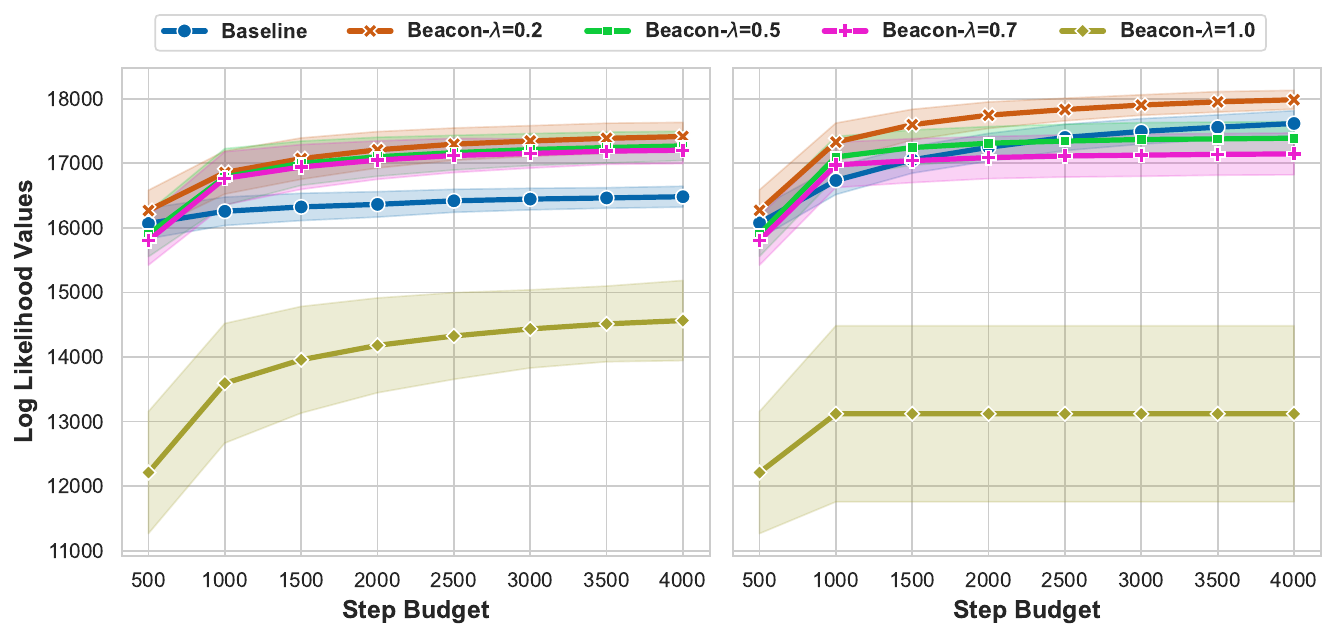}
    \caption{Average log-likelihood scores across 100 MPE queries for different \(\lambda\) values on the Grid40-1 model. The x-axis shows the step budget, and the y-axis shows the average log-likelihood with standard deviation. The left subfigure corresponds to \textsc{BEACON-Greedy}, and the right subfigure to \textsc{BEACON-GLS+}.}
    \label{fig:lambda-grid401}
\end{figure}

\begin{figure}[ht!]
    \centering
    \includegraphics[width=1.0\linewidth]{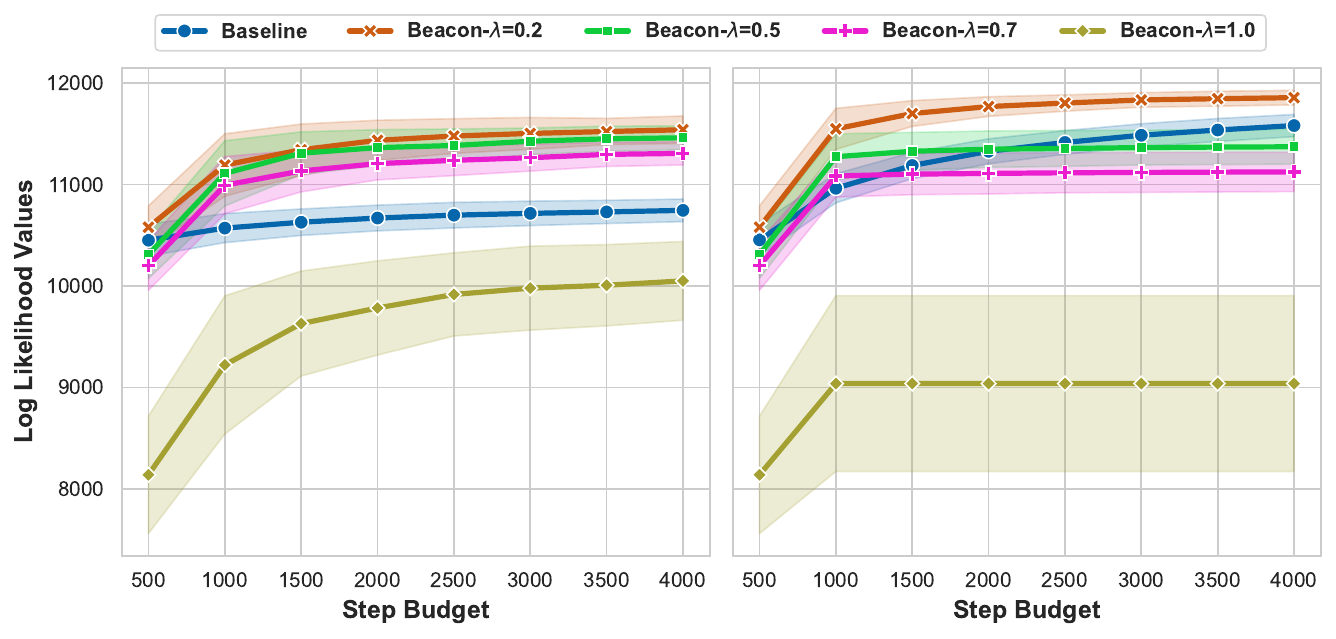}
    \caption{Average log-likelihood scores across 100 MPE queries for different \(\lambda\) values on the Grid40-2 model. The x-axis shows the step budget, and the y-axis shows the average log-likelihood with standard deviation. The left subfigure corresponds to \textsc{BEACON-Greedy}, and the right subfigure to \textsc{BEACON-GLS+}.}
    \label{fig:lambda-grid402}
\end{figure}

\begin{figure}[ht!]
    \centering
    \includegraphics[width=1.0\linewidth]{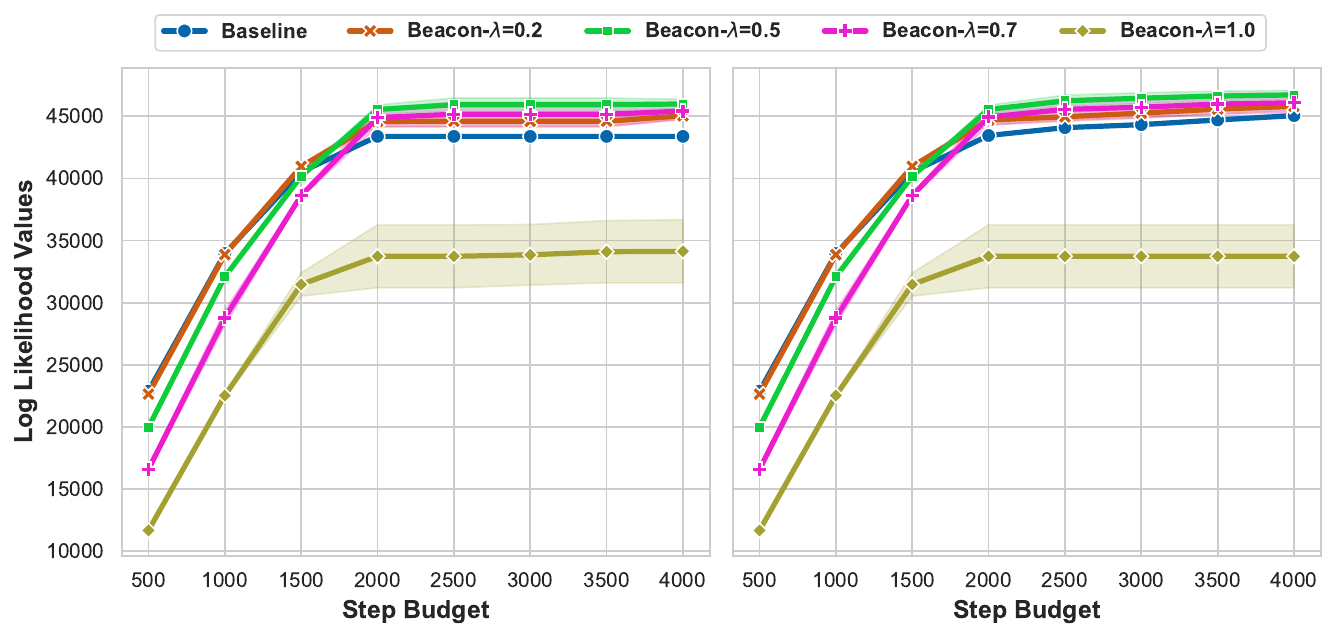}
    \caption{Average log-likelihood scores across 100 MPE queries for different \(\lambda\) values on the Grid80 model. The x-axis shows the step budget, and the y-axis shows the average log-likelihood with standard deviation. The left subfigure corresponds to \textsc{BEACON-Greedy}, and the right subfigure to \textsc{BEACON-GLS+}.}
    \label{fig:lambda-grid80}
\end{figure}

\begin{figure}[ht!]
    \centering
    \includegraphics[width=1.0\linewidth]{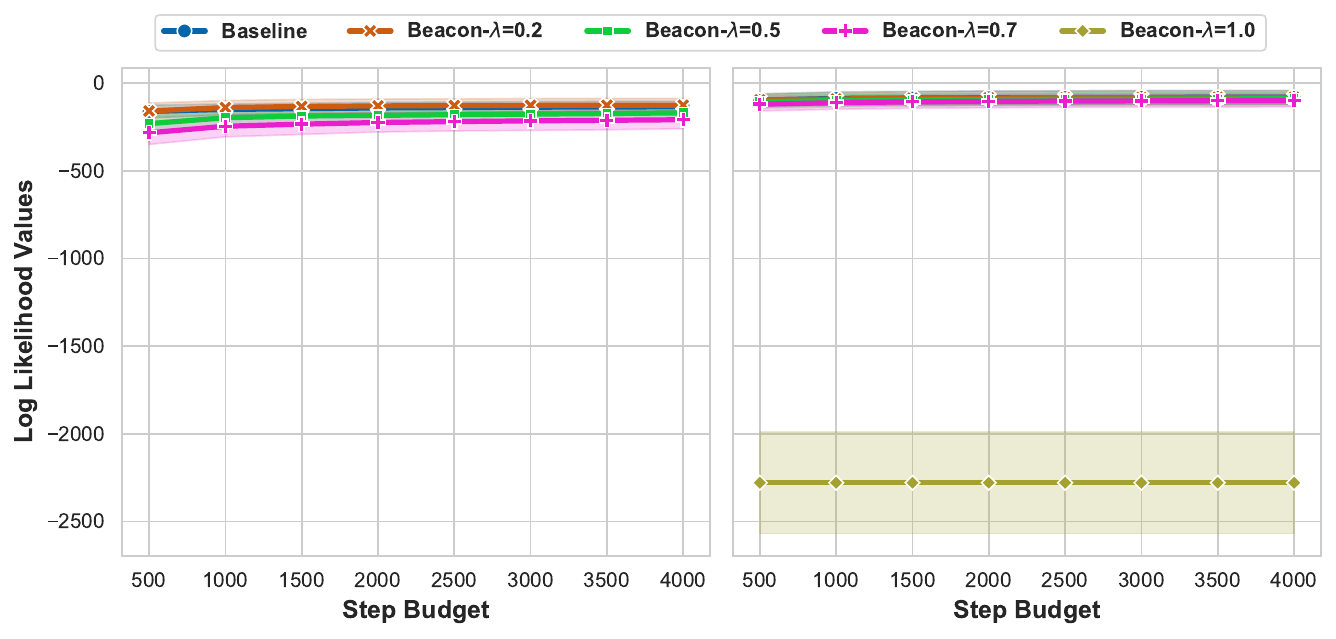}
    \caption{Average log-likelihood scores across 100 MPE queries for different \(\lambda\) values on the DL-5 model. The x-axis shows the step budget, and the y-axis shows the average log-likelihood with standard deviation. The left subfigure corresponds to \textsc{BEACON-Greedy}, and the right subfigure to \textsc{BEACON-GLS+}.}
    \label{fig:lambda-dl5}
\end{figure}

\begin{figure}[ht!]
    \centering
    \includegraphics[width=1.0\linewidth]{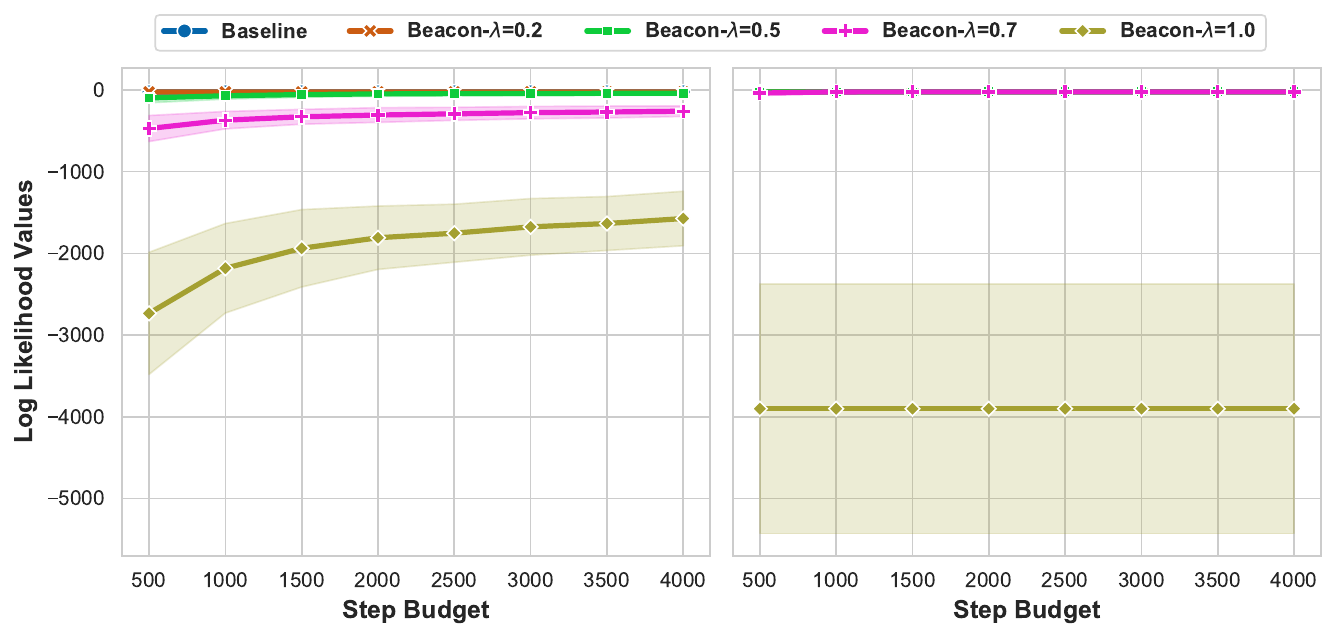}
    \caption{Average log-likelihood scores across 100 MPE queries for different \(\lambda\) values on the 1405 model. The x-axis shows the step budget, and the y-axis shows the average log-likelihood with standard deviation. The left subfigure corresponds to \textsc{BEACON-Greedy}, and the right subfigure to \textsc{BEACON-GLS+}.}
    \label{fig:lambda-1405}
\end{figure}

\begin{figure}[ht!]
    \centering
    \includegraphics[width=1.0\linewidth]{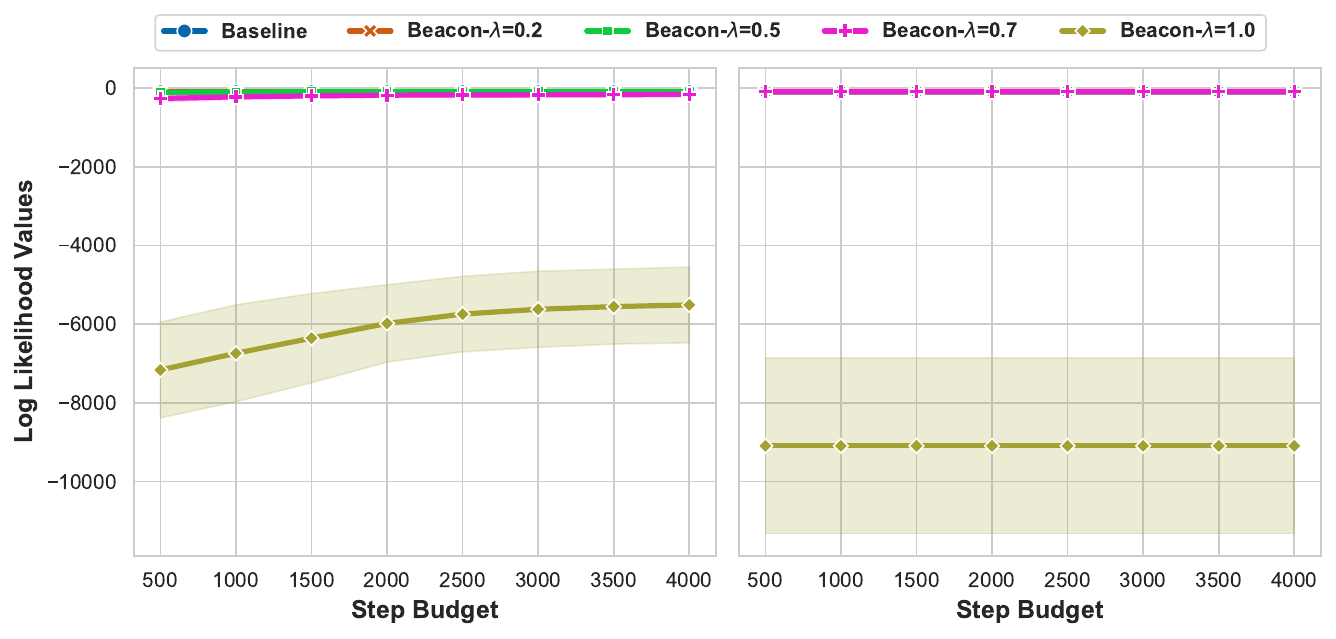}
    \caption{Average log-likelihood scores across 100 MPE queries for different \(\lambda\) values on the 1407 model. The x-axis shows the step budget, and the y-axis shows the average log-likelihood with standard deviation. The left subfigure corresponds to \textsc{BEACON-Greedy}, and the right subfigure to \textsc{BEACON-GLS+}.}
    \label{fig:lambda-1407}
\end{figure}

\begin{figure}[ht!]
    \centering
    \includegraphics[width=1.0\linewidth]{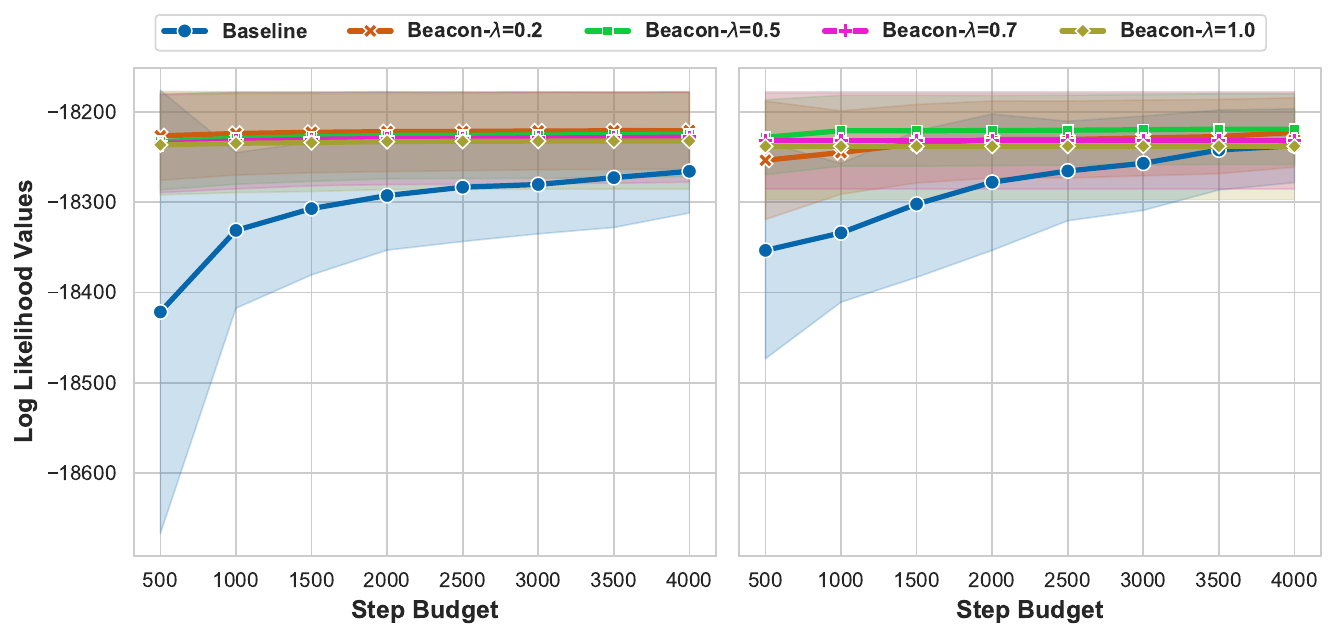}
    \caption{Average log-likelihood scores across 100 MPE queries for different \(\lambda\) values on the Le450-2 model. The x-axis shows the step budget, and the y-axis shows the average log-likelihood with standard deviation. The left subfigure corresponds to \textsc{BEACON-Greedy}, and the right subfigure to \textsc{BEACON-GLS+}.}
    \label{fig:lambda-le2}
\end{figure}

\begin{figure}[ht!]
    \centering
    \includegraphics[width=1.0\linewidth]{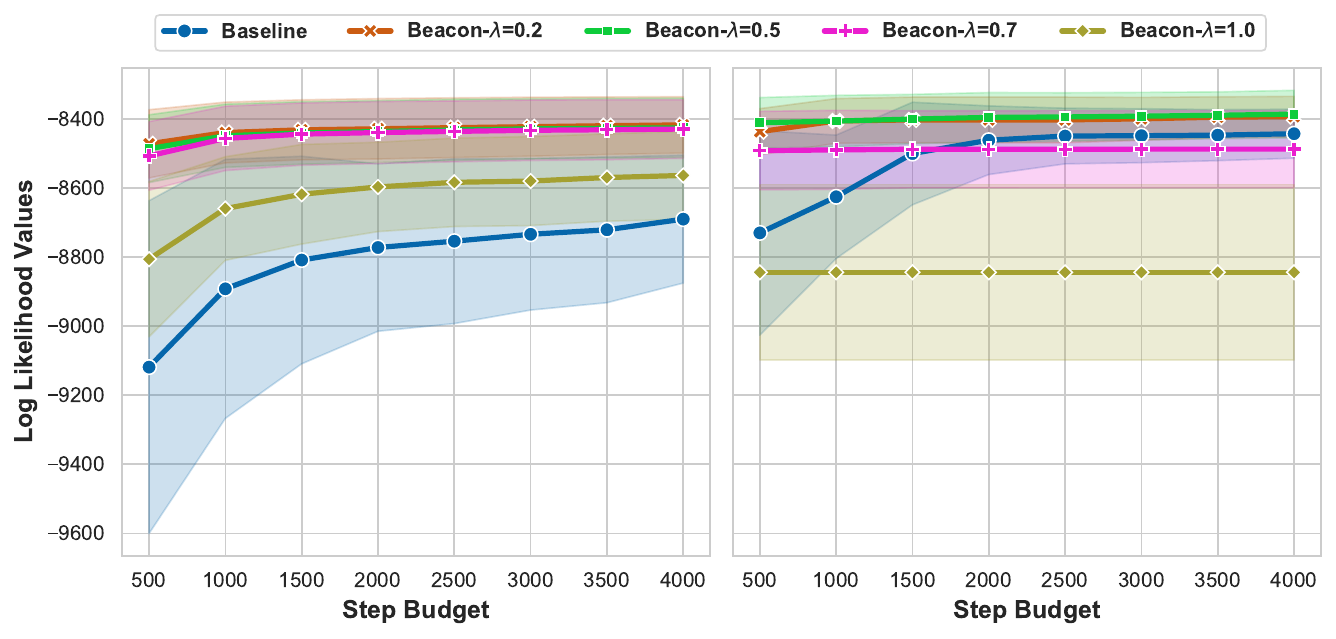}
    \caption{Average log-likelihood scores across 100 MPE queries for different \(\lambda\) values on the Le450-3 model. The x-axis shows the step budget, and the y-axis shows the average log-likelihood with standard deviation. The left subfigure corresponds to \textsc{BEACON-Greedy}, and the right subfigure to \textsc{BEACON-GLS+}.}
    \label{fig:lambda-le3}
\end{figure}

\begin{figure}[ht!]
    \centering
    \includegraphics[width=1.0\linewidth]{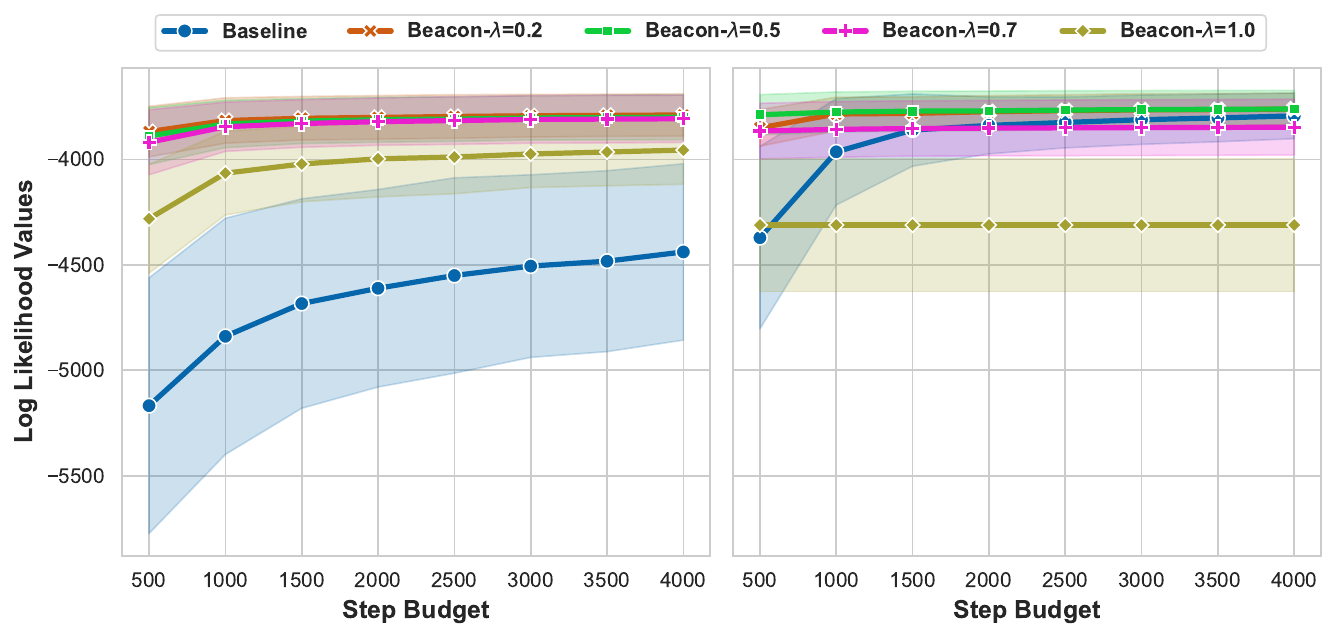}
    \caption{Average log-likelihood scores across 100 MPE queries for different \(\lambda\) values on the Le450-4 model. The x-axis shows the step budget, and the y-axis shows the average log-likelihood with standard deviation. The left subfigure corresponds to \textsc{BEACON-Greedy}, and the right subfigure to \textsc{BEACON-GLS+}.}
    \label{fig:lambda-le4}
\end{figure}

\begin{figure}[ht!]
    \centering
    \includegraphics[width=1.0\linewidth]{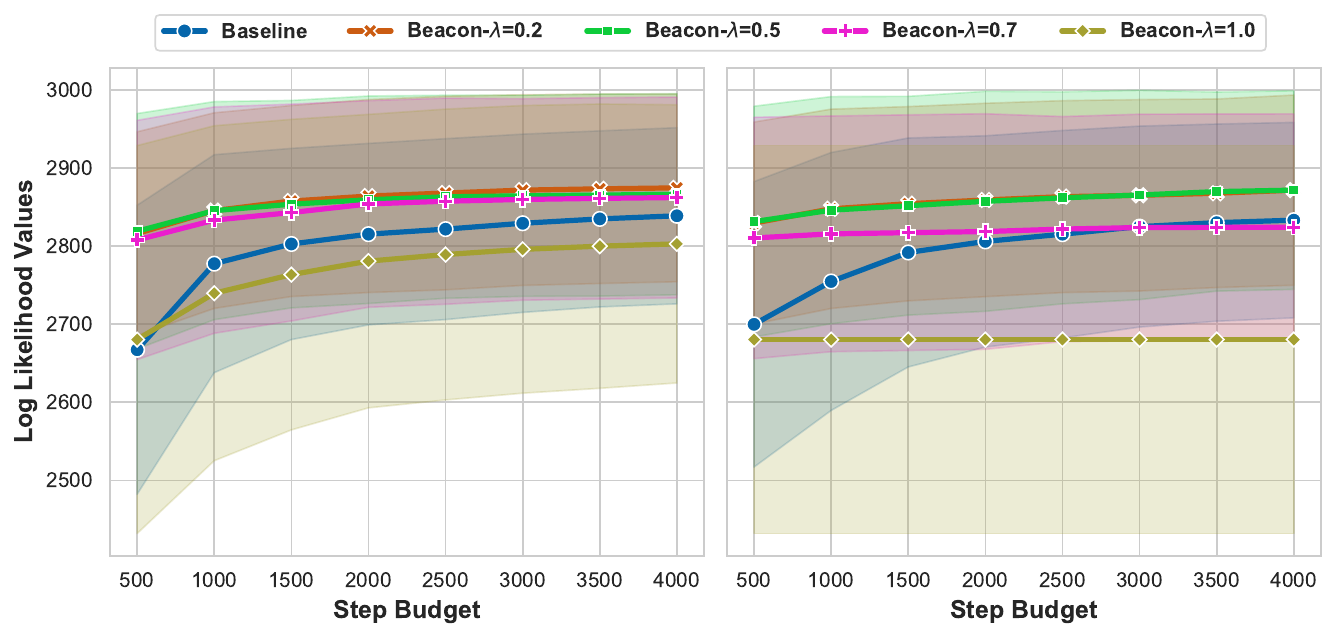}
    \caption{Average log-likelihood scores across 100 MPE queries for different \(\lambda\) values on the Rus20-1 model. The x-axis shows the step budget, and the y-axis shows the average log-likelihood with standard deviation. The left subfigure corresponds to \textsc{BEACON-Greedy}, and the right subfigure to \textsc{BEACON-GLS+}.}
    \label{fig:lambda-rus201}
\end{figure}

\begin{figure}[ht!]
    \centering
    \includegraphics[width=1.0\linewidth]{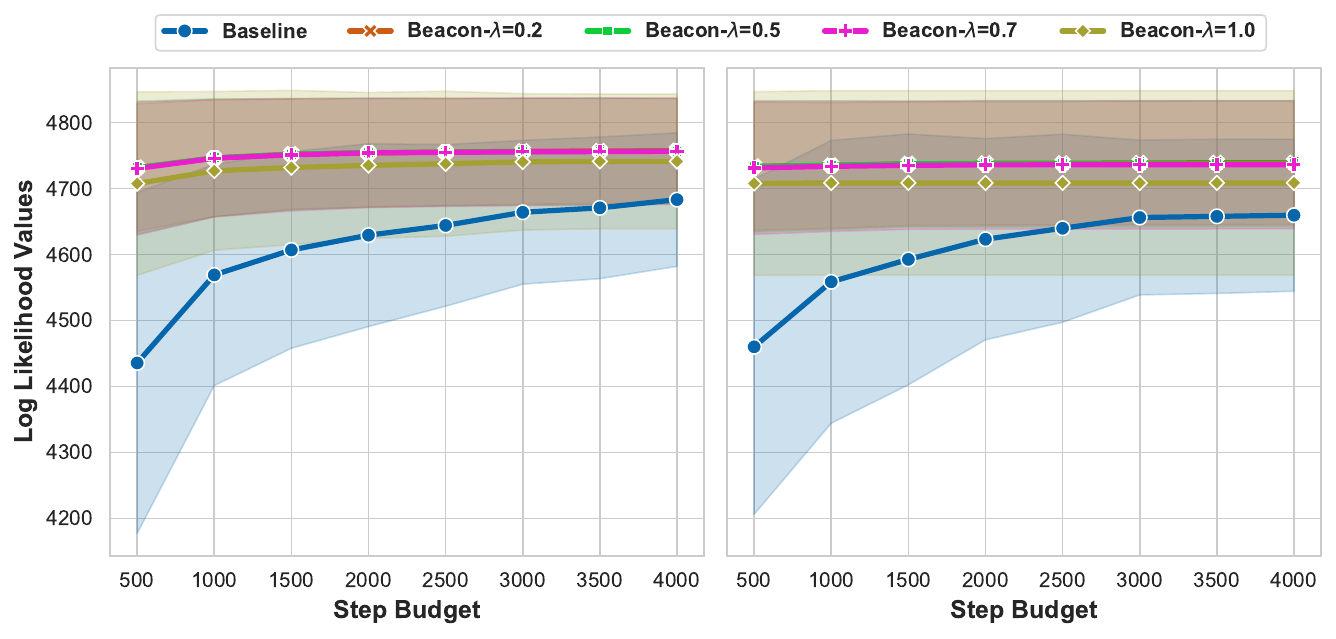}
    \caption{Average log-likelihood scores across 100 MPE queries for different \(\lambda\) values on the Rus50-1 model. The x-axis shows the step budget, and the y-axis shows the average log-likelihood with standard deviation. The left subfigure corresponds to \textsc{BEACON-Greedy}, and the right subfigure to \textsc{BEACON-GLS+}.}
    \label{fig:lambda-rus501}
\end{figure}

\begin{figure}[ht!]
    \centering
    \includegraphics[width=1.0\linewidth]{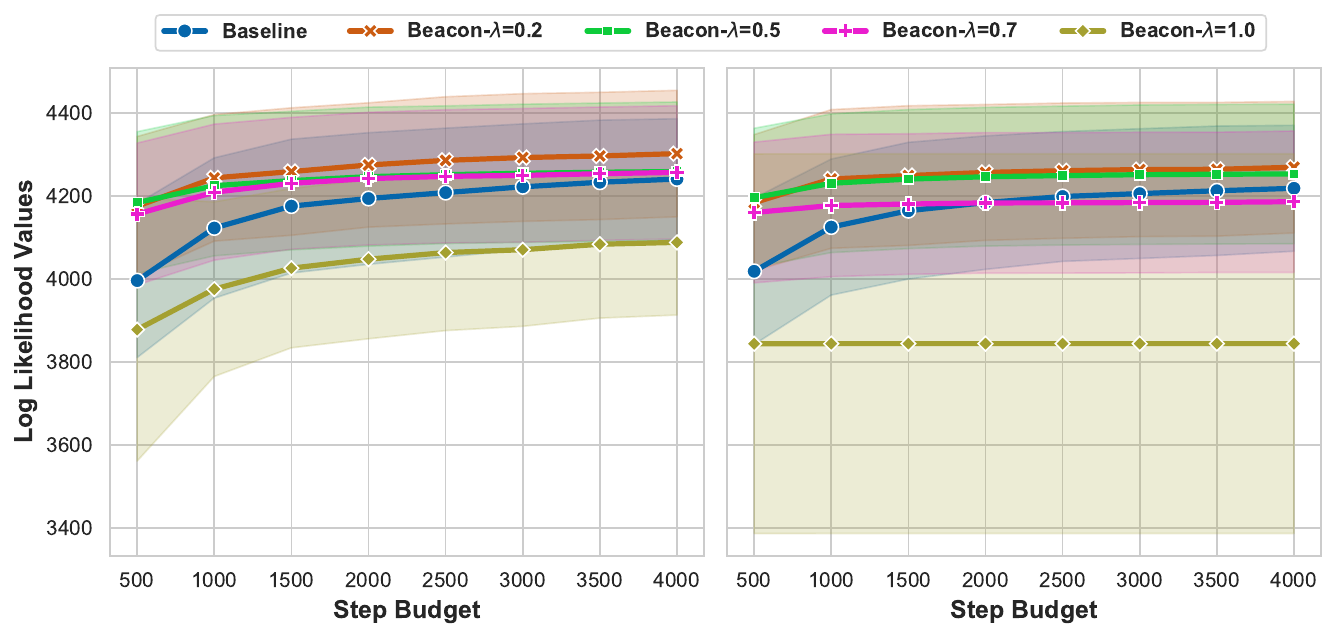}
    \caption{Average log-likelihood scores across 100 MPE queries for different \(\lambda\) values on the Rus50-2 model. The x-axis shows the step budget, and the y-axis shows the average log-likelihood with standard deviation. The left subfigure corresponds to \textsc{BEACON-Greedy}, and the right subfigure to \textsc{BEACON-GLS+}.}
    \label{fig:lambda-rus502}
\end{figure}

\begin{figure}[ht!]
    \centering
    \includegraphics[width=1.0\linewidth]{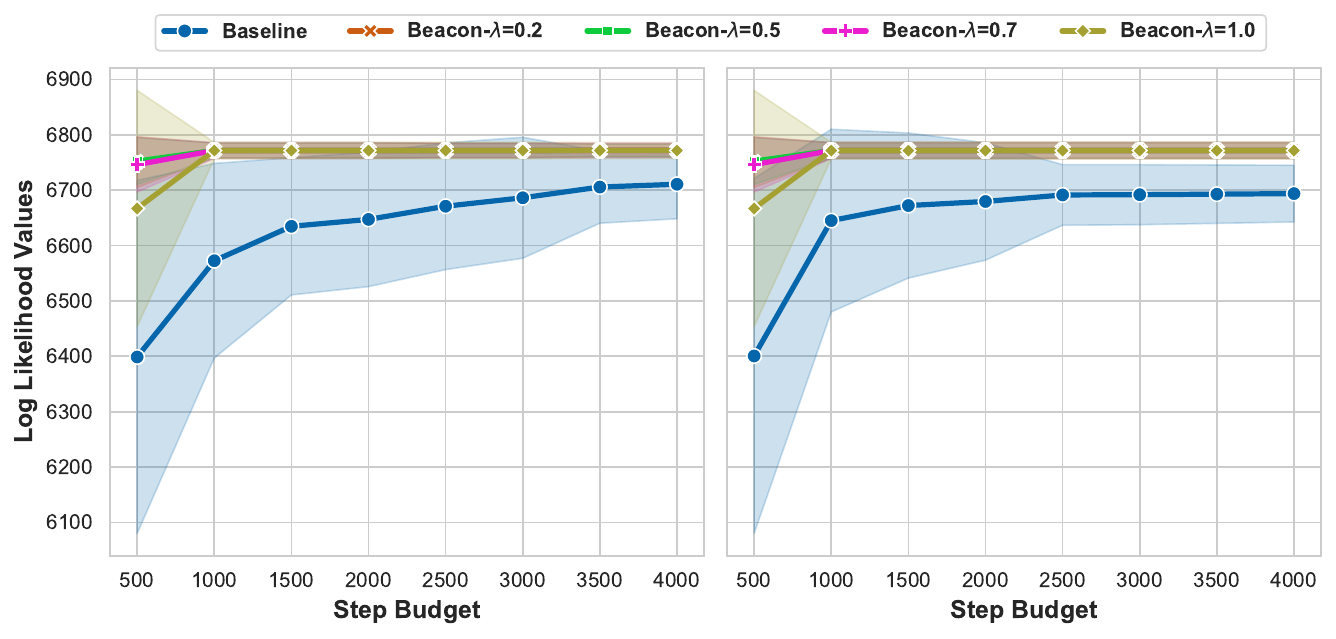}
    \caption{Average log-likelihood scores across 100 MPE queries for different \(\lambda\) values on the Rus100-1 model. The x-axis shows the step budget, and the y-axis shows the average log-likelihood with standard deviation. The left subfigure corresponds to \textsc{BEACON-Greedy}, and the right subfigure to \textsc{BEACON-GLS+}.}
    \label{fig:lambda-rus1001}
\end{figure}

\begin{figure}[ht!]
    \centering
    \includegraphics[width=1.0\linewidth]{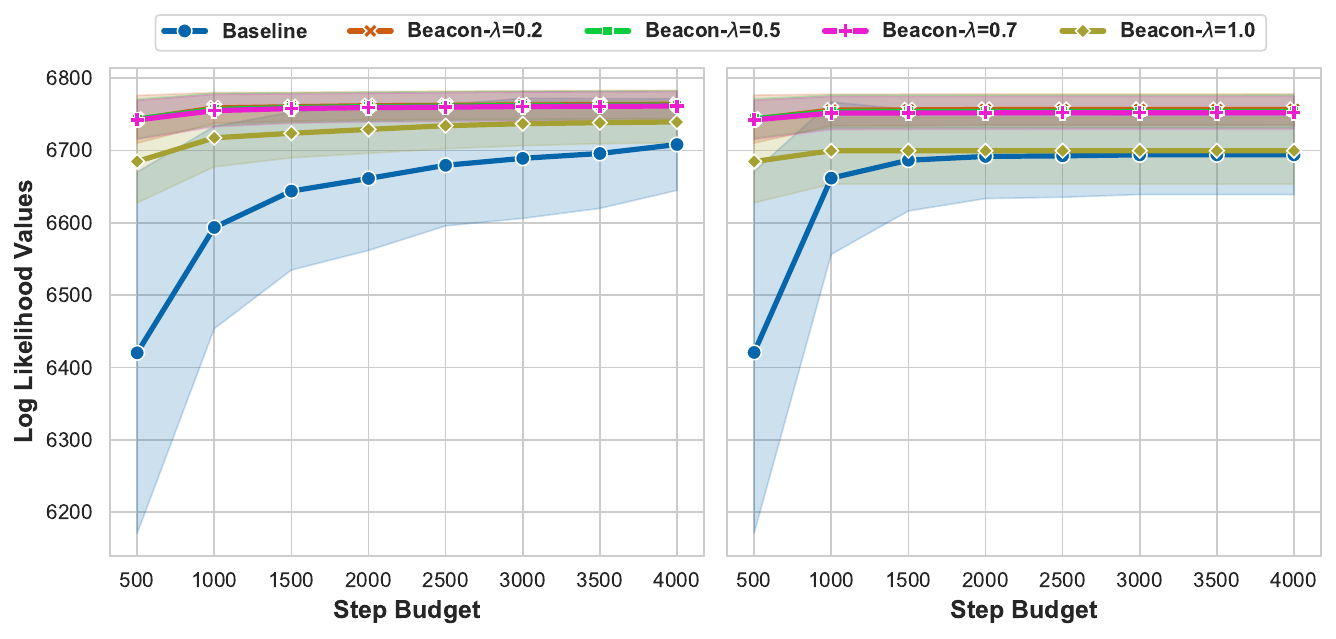}
    \caption{Average log-likelihood scores across 100 MPE queries for different \(\lambda\) values on the Rus100-2 model. The x-axis shows the step budget, and the y-axis shows the average log-likelihood with standard deviation. The left subfigure corresponds to \textsc{BEACON-Greedy}, and the right subfigure to \textsc{BEACON-GLS+}.}
    \label{fig:lambda-rus1002}
\end{figure}

\FloatBarrier

The choice of \(\lambda\) directly influences the performance of \textsc{BEACON}. Figures~\ref{fig:lambda-bn30}–\ref{fig:lambda-rus1002} illustrate this effect for \textsc{BEACON-Greedy} and \textsc{BEACON-GLS+}. Each figure reports the average log-likelihood over a validation set of 100 MPE queries for four values of \(\lambda = \{0.2, 0.5, 0.7, 1.0\}\). In all plots, the x-axis shows the step budget (ranging from 500 to 4000 in increments of 500), and the y-axis shows the average log-likelihood with standard deviation. The left subfigure presents \textsc{BEACON-Greedy} for varying $\lambda$ alongside the \textsc{Greedy} baseline, while the right subfigure presents \textsc{BEACON-GLS+} alongside the \textsc{GLS+} baseline.

Across most models, incorporating \textsc{BEACON} improves performance over the baselines, with smaller values of \(\lambda\) (0.2–0.5) consistently yielding the strongest gains. This suggests that combining modest neural guidance with likelihood-based search provides the best trade-off between exploration and exploitation. Larger values \(\lambda\) of can occasionally degrade performance, although in several cases \(\lambda\)=0.7 and even \(\lambda\) = 1.0 still outperform the corresponding baseline.

Interestingly, sensitivity to \(\lambda\) differs between the two solvers. \textsc{BEACON-Greedy}, which relies entirely on local scoring for exploration, exhibits larger performance variation and benefits most from neural guidance. In contrast, \textsc{BEACON-GLS+} exhibits more stable but comparatively smaller gains, reflecting the already strong exploratory behavior of GLS+, while still delivering consistent improvements over the baseline.

\end{document}